\def\BibTeX{{\rm B\kern-.05em{\sc i\kern-.025em b}\kern-.08em
    T\kern-.1667em\lower.7ex\hbox{E}\kern-.125emX}}
\def\P{{\mathbb P}} 
\def\p{{p}}
\def\Q{{\bm Q}} 
\def\E{{\mathbb E}} 
\DeclareMathOperator*{\argmin}{arg\,min}
\newcommand*\dif{\,\mathrm{d}}
\newcommand\numberthis{\addtocounter{equation}{1}\tag{\theequation}}
\newtheorem{lemma}{Lemma}
\newtheorem{proposition}{Proposition}
\newtheorem{corollary}{Corollary}
\newtheorem{theorem}{Theorem}
\def\authorrefmark#1{\ensuremath{^{\textbf{#1}}}}
\begin{document}

% \receiveddate{XX Month, XXXX}
% \reviseddate{XX Month, XXXX}
% \accepteddate{XX Month, XXXX}
% \publisheddate{XX Month, XXXX}
% \currentdate{XX Month, XXXX}
% \doiinfo{TMLCN.2022.1234567}

\authornote{This research was supported by (i) the VINNOVA Competence Center for Trustworthy Edge Computing Systems and Applications (TECoSA) at KTH Royal Institute of Technology, (ii) the Vetenskap Radet (VR) grant 2022-03922 - Optimal Sampling for Interactive Networked Applications, and (iii) the European Union through MSCA-PF project “DIME: Distributed Inference for Energy-efficient Monitoring at the Network Edge” under Grant Agreement No. 101062011.}
\markboth{Getting the Best Out of Both Worlds: Algorithms for Hierarchical Inference at the Edge}{Moothedath {et al.}}

% \title{Getting the Best Out of Both Worlds: Algorithms for Hierarchical Inference at the Edge}
\title{Online Algorithms for Hierarchical Inference in Deep Learning applications at the Edge
}
\author{Vishnu Narayanan Moothedath\authorrefmark{1}\orcidlink{0000-0002-2739-5060},~\IEEEmembership{}
        Jaya Prakash Champati\authorrefmark{2}\orcidlink{0000-0002-5127-8497},~\IEEEmembership{}
        and~James Gross\authorrefmark{3}\orcidlink{0000-0001-6682-6559}\IEEEmembership{}
        }
\affil{Department of Intelligent Systems, KTH Royal Institute of Technology, Malvinas Väg 10, Stockholm 11428, Sweden}
\affil{Edge Networks Group, IMDEA Networks Institute, Avda. del Mar Mediterraneo 22, 28918 Leganes (Madrid), Spain}
\affil{Department of Intelligent Systems, KTH Royal Institute of Technology, Malvinas Väg 10, Stockholm 11428, Sweden}
\corresp{Corresponding author: Vishnu Narayanan Moothedath (email: vnmo@kth.se).}

\begin{abstract}
We consider a resource-constrained Edge Device (ED), such as an IoT sensor or a microcontroller unit, embedded with a small-size ML model (S-ML) for a generic classification application and an Edge Server (ES) that hosts a large-size ML model (L-ML). Since the inference accuracy of S-ML is lower than that of the L-ML, offloading all the data samples to the ES results in high inference accuracy, but it defeats the purpose of embedding S-ML on the ED and deprives the benefits of reduced latency, bandwidth savings, and energy efficiency of doing local inference. In order to get the best out of both worlds, i.e., the benefits of doing inference on the ED and the benefits of doing inference on ES, we explore the idea of \textit{Hierarchical Inference} (HI), wherein S-ML inference is only accepted when it is correct, otherwise the data sample is offloaded for L-ML inference. However, the ideal implementation of HI is infeasible as the correctness of the S-ML inference is not known to the ED. We thus propose an online meta-learning framework that the ED can use to predict the correctness of the S-ML inference. In particular, we propose to use the maximum softmax value output by S-ML for a data sample and decide whether to offload it or not. The resulting online learning problem turns out to be a Prediction with Expert Advice (PEA) problem with continuous expert space. For a full feedback scenario, where the ED receives feedback on the correctness of the S-ML once it accepts the inference, we propose the HIL-F algorithm and prove a sublinear regret bound  $\sqrt{n\ln(1/\lambda_\text{min})/2}$ without any assumption on the smoothness of the loss function, where $n$ is the number of data samples and $\lambda_\text{min}$ is the minimum difference between any two distinct maximum softmax values across the data samples. For a no-local feedback scenario, where the ED does not receive the ground truth for the classification, we propose the HIL-N algorithm and prove that it has $O\left(n^{{2}/{3}}\ln^{{1}/{3}}(1/\lambda_\text{min})\right)$ regret bound. We evaluate and benchmark the performance of the proposed algorithms for image classification application using four datasets, namely, Imagenette and Imagewoof~\cite{Howard2020}, MNIST~\cite{LeCun2010}, and CIFAR-10~\cite{cifar10}.
\end{abstract}

\begin{IEEEkeywords}
Hierarchical inference, edge computing, regret bound, continuous experts
\end{IEEEkeywords}

%\IEEEspecialpapernotice{(Invited Paper)}

\maketitle
\section{Introduction}\label{sec:intro}
\IEEEPARstart{E}{merging} applications in smart homes, smart cities, intelligent manufacturing, autonomous internet of vehicles, etc., are increasingly using Deep Learning (DL) inference. Collecting data from the Edge Devices (EDs) and performing remote inference in the cloud results in bandwidth, energy, and latency costs as well as reliability (due to wireless transmissions) and privacy concerns. Therefore, performing local inference using embedded DL models, which we refer to as S-ML (Small-ML) models, on EDs has received significant research interest in the recent past \cite{Wang2020,Iborra2020,Deng2020}. These S-ML models range from DL models that are optimised for moderately powerful EDs, such as mobile phones, to tinyML DL models that even fit on micro-controller units. However, S-ML inference accuracy reduces with the model size and can be potentially much smaller than the inference accuracy of large-size state-of-the-art DL models, which we refer to as L-ML (Large-ML) models, that can be deployed on Edge Servers (ESs). For example, for an image classification application, an S-ML can be a quantized \textit{MobileNet}~\cite{Howard2017} with a width multiplier of $0.25$, that has a memory size of $0.5$ MB and an inference accuracy of $39.5\%$ for classifying ImageNet dataset~\cite{Imagenet2009}, whereas CoCa~\cite{Wortsman2022}, an L-ML, has an accuracy of $91\%$ and a memory size in the order of GBs. 

\begin{figure*}[th]
\centering
\begin{minipage}{.55\textwidth}
  \centering
  \includegraphics[width=0.99\linewidth]{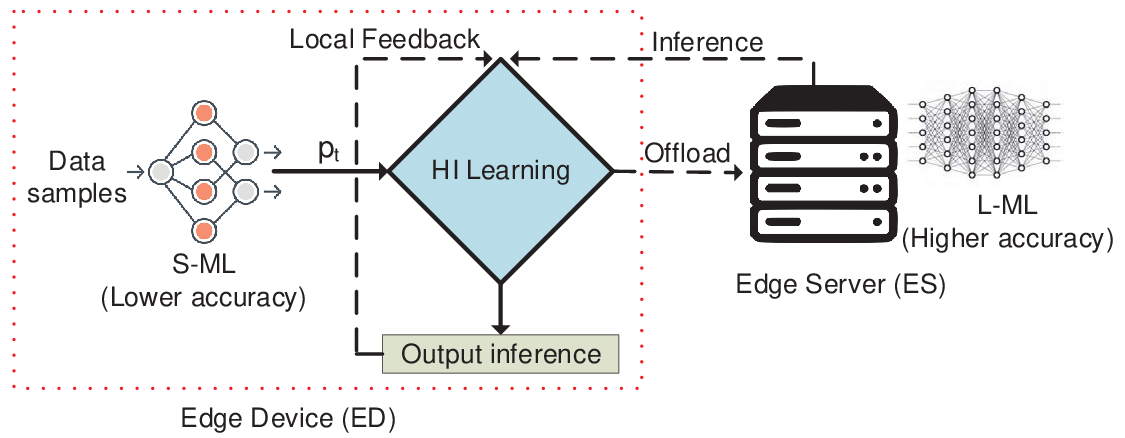}
  \caption{Schematic of the HI meta-learning framework}
  \label{fig:system}
\end{minipage}%
\begin{minipage}{.45\textwidth}
  \centering
  \includegraphics[width=0.99\linewidth]{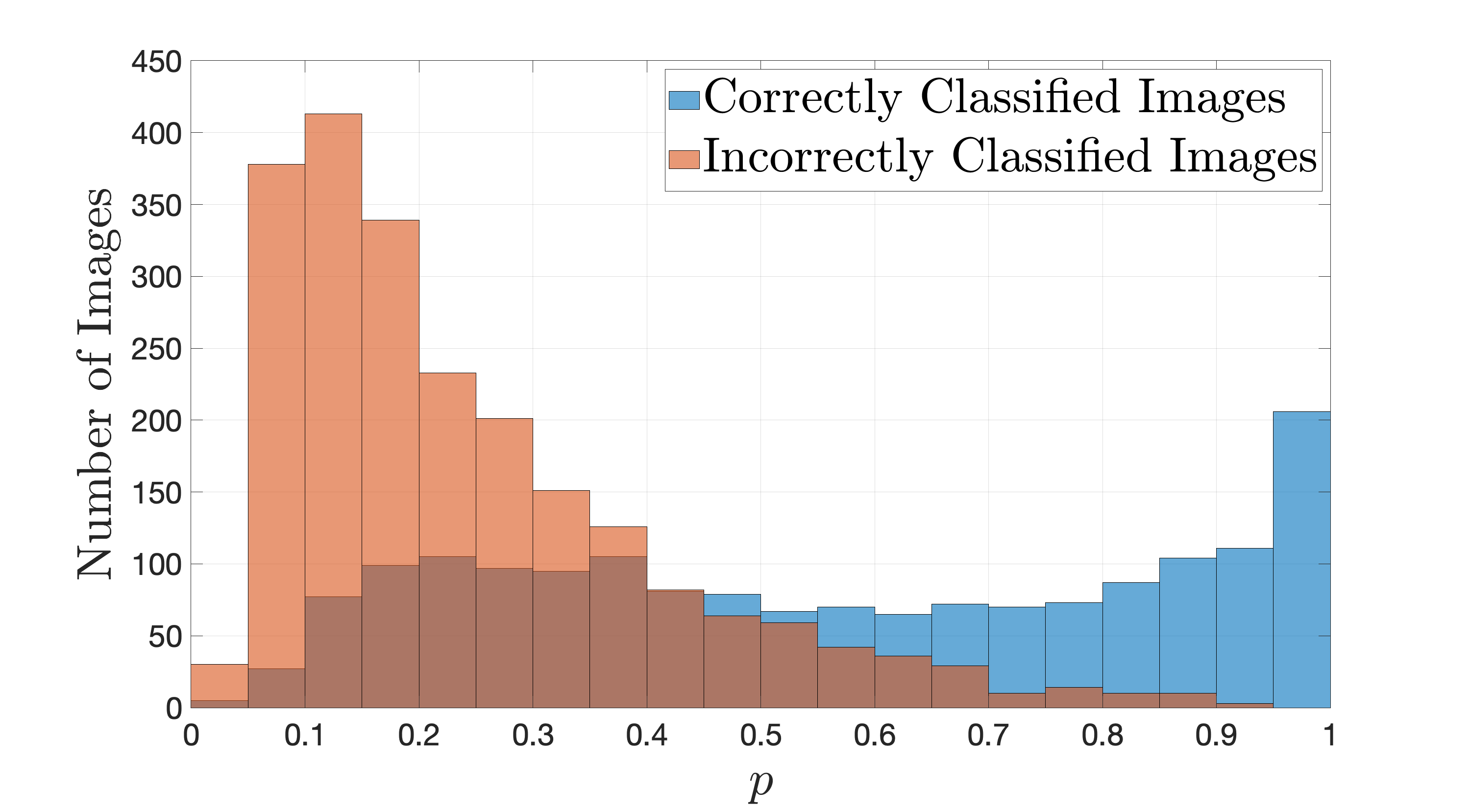}
  \caption{Classification of Imagenette by a small-size quantized MobileNet using width multiplier $0.25$~\cite{Howard2017}.}
  \label{fig:histogram}
\end{minipage}
\end{figure*}

One may choose to achieve the accuracy of L-ML model while utilising the computational capabilities of EDs using the well-known DNN partitioning techniques, e.g., see~\cite{Kang2017,Li2020,Hu2022}. Note that such partitioning techniques require processing time and energy profiling of the layers on EDs as well as on ESs to decide the optimal partition points. Early Exit is yet another technique that introduces side branches in between the layers of DL models to trade-off accuracy with latency \cite{teerapittayanon2016}. In contrast to these techniques, in this work, we explore the novel idea of \textit{Hierarchical Inference} (HI). Consider that an ED is embedded with an S-ML and an L-ML\footnote{Both S-ML and L-ML are trained ML models deployed for providing inference and HI does not modify these models.} is deployed on an ES (to which the ED enlists to get help for doing inference). In HI, we propose that an ED first observes the S-ML inference on each data sample and offloads it to L-ML only if S-ML inference is incorrect.

Clearly, the ambition of HI is to maximise the use of S-ML in order to reap the benefits of reduced latency, bandwidth savings, and energy efficiency while not losing inference accuracy by offloading strategically to L-ML, thus achieving the best benefits out of the two worlds:  EDs and ESs.  
%Such an offloading strategy can be supported by initially attempting to classify each sample using S-ML.
However, the central challenge is that the incorrect inferences are inherently unknown at the ED, and a decision under uncertainty needs to be taken. 
Thus, for each sample, we ask the question: should the ED accept the inference from S-ML or offload for inference from L-ML?
% \begin{figure}[t]
% 	\centering
% 	\includegraphics[width=0.7\linewidth]{./figures/system.pdf}
% 	\caption{Schematic of the HI learning framework}
% 	\label{fig:system}
% 	%\vspace{-.4cm}
% \end{figure}
In this work, we focus on the pervasive \textit{classification applications} and address the above question as an online sequential decision problem by proposing
% To address this sequential decision problem, we propose 
a novel HI meta-learning framework, shown in Fig.~\ref{fig:system}. This framework facilitates the ED in deciding in real time whether an S-ML inference for a given sample should be accepted or rejected, where in the latter case the sample is offloaded for an accurate inference by the L-ML.

%We note that there was significant research on ML inference at the edge, but as explained in Section~\ref{sec:related}, to the best of our knowledge, none of these works studied hierarchical inference under online learning without prior availability of the samples.
In our framework, for each sample, the HI learning algorithm observes $p$, the maximum softmax value among the (normalized) softmax values 
% \footnote{Our framework allows other metrics, besides $p$, that are computed based on the probability distribution over classes.} 
output by the S-ML for the classes. 
In a DNN for example, a sigmoid function in the last layer outputs these softmax values. 

It then decides to offload, receiving a fixed cost $0 \leq \beta < 1$, or not to offload, receiving a cost $0$ if the inference is correct, and a cost $1$, otherwise.
We will show later that this cost structure facilitates HI by maximising the offloading of samples with incorrect inference and minimising offloading the samples with correct inference. We will also argue in Section~\ref{sec:model} that any other arbitrary costs could be transformed into this particular cost structure. 
Here, the offload cost can be considered as the cost incurred for communication, latency, resource usage, etc., and the fluctuations in these from sample to sample could be captured using an average. {We also assume that S-ML accepts the inference of L-ML as the ground truth implying that the top-1 accuracy of L-ML is $100$\%. 
We use this only for simplicity and is not necessary for our results. We will further discuss this assumption in Section~\ref{sec:model} and also present the analysis without it later in Appendix~\ref{appendixImperfectLML}.}

%Hence, without loss of generality, one can assume an L-ML with accuracy $1$ which will simplify the explanations.
%If $\beta > 1$, then the decision is, trivially, not to offload. 
%We assume that the inference provided by L-ML is the ground truth and do not assign any cost for incorrect inference given by L-ML. The justification for this assumption is that the ED does not know the ground truth when L-ML provides incorrect inference and thus by accepting the L-ML inference it tries to achieve the top-1 accuracy of L-ML, and a performance better than the L-ML is unachievable. Hence, without loss of generality, one can assume an L-ML with accuracy $1$ which will simplify the explanations.
%Note that this system of costs can be extended to any other arbitrary costs. One can always subtract any common costs incurred irrespective of the decision and then normalize with respect to local incorrect inference cost to transform the arbitrary system to the above standard form.

Intuitively, if the maximum softmax value $p$ is high, then accepting S-ML inference will likely result in cost $0$ and thus, it is beneficial to do so. However, if $p$ is low, the cost will likely be equal to $1$, and thus offloading with cost $\beta$ is beneficial. This can be seen from Fig.~\ref{fig:histogram}, where we present the number of misclassified and correctly classified images of the dataset \textit{Imagenette}~\cite{Howard2020} by the classifier \textit{MobileNet}~\cite{Howard2017}. Observe that, for $p  \geq  0.45$ (approximately) it is beneficial to accept the inference of MobileNet in the simple sense that there are more images correctly classified, and offloading is beneficial for $p<0.45$. For specific costs, this can be easily done offline using a simple search if similar histograms are available, which is not the case in any real-time scenario. To provide an intuitive idea, one can visually find out this 
 optimum threshold (for $\beta = 0.5$) by simply finding a point such that the total number of incorrectly classified images (i.e., the brown bars) below it is equal to the total number of images (i.e., the brown and blue bars) above it. The problem that we pose is finding the optimum threshold in an online manner.

\begin{table*}[t]
\centering
    % \resizebox{\linewidth}{!}{ 
    \begin{tabular}{|ll|ll|ll|}
    \hline
    ED&edge device&HI&hierarchical inference&HIL-F&HIL algorithm: full feedback\\
    ES&edge server&HIL&hierarchical inference learning&HIL-N&HIL algorithm: no-local feedback\\
    S-ML&small-size ML&PEA&prediction with expert advice&EWF&exponentially weighted forecaster\\
    % \cline{3-6}
    % L-ML&large-size ML&$\p_{[i]}$&\multicolumn{3}{l|}{$i^{\mathrm{th}}$ smallest district value in the set of available probabilities $\{\p_1,\p_1,\dots\}$}\\
    L-ML&large-size ML& MAB &multi-armed bandit&DNN&deep neural network\\
    \hline
    \end{tabular}
    % }
    \caption{List of abbreviations.}
    \label{tab:abbr}
\end{table*}

The above problem falls in the domain of Prediction with Expert Advice (PEA)~\cite{BianchiBook}. 
However, given the continuous expert space (or action space) for $\theta$, as elaborated in Section~\ref{sec:background}, the standard Exponentially Weighted Average Forecaster (EWF) cannot be used here. 
Additionally, another challenge is that the local cost is unobservable when the S-ML inference is accepted due to the unavailability of the ground truth at the ED. This situation characterizes a \textit{no-local feedback} scenario. 
To simplify the solution, we initially consider a \textit{full feedback} scenario assuming local feedback is available, and then adapt the solution to the more realistic no-local feedback scenario, resulting in algorithms designated as HIL-F and HIL-N for the full feedback and no-local feedback scenarios, respectively.

A novel aspect of our algorithms is that they use the structural properties of the HI learning problem at hand to find a set of non-uniform intervals obtained by doing dynamic and non-uniform discretisations and use these intervals as experts, thereby transforming the problem from a continuous to a discrete domain without introducing any error due to this discretisation. 
% This attains the simplicity standard discrete online HI, with no accuracy reduction from the learning in the continuous domain. 
% We attain this by using the piece-wise continuous nature of our problem discussed in the upcoming sections.
To the best of our knowledge, our work is the first attempt to extend the concept of continuous experts to the no-local feedback scenario and find regret bounds for the same.
% \begin{figure}[t]
% \centerline{\includegraphics[width=0.7\linewidth]{figures/hist.eps}}
% \caption{Classification of Imagenette by a small-size quantized MobileNet using width multiplier $0.25$~\cite{Howard2017}.}
% \label{fig:histogram}
% \end{figure}
We summarise our main contributions below.
\begin{itemize}
    \item We propose a novel meta-learning framework for HI that decides whether a data sample that arrived should be offloaded or not based on S-ML output. For the full feedback scenario, we prove $\Omega\left(\sqrt{n \log n}\right)$ lower bound for the regret bound that can be achieved by any randomised algorithm for a general loss function, where $n$ is the number of data samples.
    \item We propose the HIL-F (HI Learning with Full Feedback) algorithm that uses exponential weighting and dynamic non-uniform discretisation. We prove that HIL-F has $\sqrt{n \ln (1/\lambda_\text{min})/2}$ regret bound, where $\lambda_\text{min}$ is the minimum difference between any two \textbf{distinct} $p$ values among the $n$ samples.
    \item We propose HIL-N (HI Learning with no-local feedback) algorithm, which on top of HIL-F, uses an unbiased estimate of the loss function. We prove a regret bound $O\left(n^{{2}/{3}}\ln^{{1}/{3}}(1/\lambda_\text{min})\right)$. We discuss the ways to approximate $\lambda_\text{min}$ and find the optimal values of the parameters used.
    \item We show that the computational complexity of our algorithms in round $t$ is $O\left(\min(t,\frac{1}{\lambda_{\mathrm{min}}})\right)$.% which increases linearly and then saturates to a constant.
    \item Finally, we evaluate the performance of the proposed algorithms for an image classification application using four datasets, namely, Imagenette and Imagewoof~\cite{Howard2020}, MNIST~\cite{LeCun2010}, and CIFAR-10~\cite{cifar10,CifarDS}. For the first two, we use MobileNet, for MNIST, we implement a linear classifier, and for CIFAR-10, we use a readily available CNN. We compare with four baselines -- the optimal fixed-$\theta$ policy, one that offloads all samples, one that offloads none, and a hypothetical genie algorithm that knows the ground truth. 
    % We also use two benchmark algorithms, namely, the optimal fixed-$\theta$ policy, and a genie policy. 
    %which only offloads the samples that will be misclassified by S-ML. 
\end{itemize}

 In our recent work \cite{mobisys_HI,behera2023}, we have provided a more general definition of HI, provided multiple use cases, and also compared HI with existing DL inference approaches at the edge. However, in these works, we used a fixed threshold for offloading without the online learning aspect that we studied in this work. Further, in this work, we rely heavily on analytical results and discuss how close the solution is to an offline optimum.

This paper is organised as follows: In Section~\ref{sec:related} we go
through the related research and explain the novelty in the
contributions. 
In Section~\ref{sec:model}, we describe the system model followed by some background information and preliminary results in Section~\ref{sec:background}. 
Sections~\ref{sec:FullFeedback} and \ref{sec:NoFeedback} details HIL-F and HIL-N, derive their regret bounds, and Section~\ref{sec:complexity} discuss their computational complexity.
Finally, we show the numerical results in Section~\ref{sec:simulations} and conclude in Section~\ref{sec:conclusion}.

We summarise some important abbreviations in TABLE~\ref{tab:abbr}.
\section{Related Work}\label{sec:related}

\textbf{\textit{Inference Offloading:}}
Since the initial proposal of edge computing in~\cite{Satyanarayanan2009}, significant attention was given to the computational offloading problem, wherein the ED needs to decide which jobs to offload and how to offload them to an ES. The majority of works in this field studied offloading generic computation jobs, e.g., see~\cite{Cuervo2010,Guo2019,Sundar2020}. In contrast, due to the growing interest in edge intelligence systems, recent works studied offloading data samples for ML inference both from a theoretical~\cite{Ogden2020,Fresa2021,Nikoloska2021} and practical~\cite{WangSathya2018,WangSathya2019} perspectives.
%In \cite{Wang2019}, the authors studied the problem of maximizing inference accuracy on an edge device subject to a deadline constraint for each frame of a video analytics application.
In \cite{Ogden2020}, offloading between a mobile device and a cloud is considered. The authors account for the time-varying communication times by using model selection at the cloud and by allowing the duplication of processing the job a the mobile device.
In \cite{Fresa2021}, the authors considered a scalable-size ML model on the ED and studied the offloading decision to maximise the total inference accuracy subject to a time constraint. All the above works focus on dividing the load of the inference and do not consider HI and online learning. 
%In \cite{Nikoloska2021}, the authors studied the problem of scheduling inference jobs between an IoT device and a cloud, and the objective is to maximize the average accuracy subject to a maximum energy constraint in the system. 
Our work is in part motivated by \cite{Nikoloska2021}, where the authors assumed that the energy consumption for local inference is less than the transmission energy of a sample and studied offloading decision based on a confidence metric computed from the softmax values for the classes. However, in contrast to our work, the authors do not consider the meta-learning framework and compute a threshold for the confidence metric based on the energy constraint at the ED.

\textbf{\textit{On-Device Inference:}}
Several research works focused on designing S-ML models to be embedded on EDs that range from mobile phones to microcontroller units. 
%Doing inference completely on the edge device S-ML models. 
While optimisation techniques such as parameter pruning and sharing~\cite{Han2015}, weights quantisation~\cite{Rastegari2016}, and low-rank factorisation~\cite{Denton2014} were used to design the S-ML models, techniques such as EarlyExit were used to reduce the latency of inference. For example, \cite{Wang2019} studied the use of DNNs with early exits~\cite{teerapittayanon2016} on the edge device, while \cite{Taylor2018} studied the best DNN selection on the edge device for a given data sample to improve inference accuracy and reduce latency. These works do not consider inference offloading and in turn HI.

\textbf{\textit{DNN Partitioning:}} Noting that mobile devices such as smartphones are embedded with increasingly powerful processors and the data transmitted between intermediate layers of a DNN is much smaller than the input data in several applications, the authors in\cite{Kang2017} studied partitioning DNN between a mobile device and cloud to reduce the mobile energy consumption and latency. Following this idea, significant research work has been done that includes DNN partitioning for more general DNN structures under different network settings \cite{Chuang2019,Li2020} and using heterogeneous EDs \cite{Hu2022}, among others.
In contrast to DNN partitioning, under HI, ED and ES may import S-ML and L-ML algorithms from the pool of trained ML algorithms available on open-source libraries such as Keras, TFLite, and PyTorch. Furthermore, HI doesn't even require that S-ML and L-ML be DL models but rather can even be signal processing algorithms. 
%The advantage of having independence between the ED and ES is that Also, 
On the one hand, there is significant research by the tinyML community for building small-size DNNs that can be embedded on micro-controllers and also in designing efficient embedded hardware accelerators~\cite{Iborra2020}. On the other hand, abundant state-of-the-art DNNs are available at edge servers that provide high inference accuracy. Our work is timely as HI will equip ML at the edge to reap the benefits of the above two research thrusts. To the best of our knowledge, we are the first to propose an online meta-learning framework for HI.

\textbf{\textit{Online Learning:}} 
The problem of minimising the regret, when the decision is chosen from a finite expert space falls under the well-known Prediction with Expert Advice (PEA) or Multi-Armed Bandit (MAB) problems~\cite{BubeckMABBook,BianchiBook}. We will explain more about these problems in Section~\ref{sec:background}. We will see that we cannot directly use these problems due to the uncountable nature of the expert space in our problems which we will elaborate in ~\ref{sec:model}. We will also explain why some of the existing literature on continuous extensions of PEA and MAB are not suited or sub-optimal for our specific problem. 

\textbf{\textit{Classification with Rejection:}} 
In the machine learning community, classification with rejection methods, which accept only the confident inferences and reject (equivalent to 'offload' in our problem) the rest, has been well studied in the
literature. In the survey paper \cite{survey_learningtoreject}, the authors elaborate on these methods in detail and discuss several confidence metrics. One could potentially use an S-ML model with a rejection option to directly facilitate the offloading decision. However, S-ML models with a reject option come at the expense of higher resource requirements. For example, training a multi-class classifier with $m$ classes with a reject option requires training $m$ binary
classifiers \cite{Charoenphakdee2021}. For applications like image classification which have a large number of classes (e.g.
ImageNet has 1000 classes) loading such a model with the rejection option will be prohibitive for resource-constrained devices. In contrast, we use the basic confidence metric $p$ for offloading or accepting decisions. Our framework is quite flexible in that, any off-the-shelf ML models can be used as the S-ML. Further, unlike classification with rejection mode, our online learning approach for $p$ will potentially benefit the decisions even when the samples are generated out-of-distribution. 

In \cite{hendrycks2016baseline}, it was shown that the maximum softmax value is a very strong confidence metric for the detection of potential errors. As mentioned already, we use this metric $\p$ throughout the paper. However, one could also use other metrics such as the difference between the first and the second largest probabilities~\cite{de2000reject,cordella1995method}. In \cite{survey_learningtoreject}, the authors also discuss calibrated models where the maximum softmax value also reflects the actual likelihood of correctness of the corresponding class. Note that in our HI framework, S-ML need not be calibrated, and the proposed algorithm and analysis apply to any confidence metric.

\section{System Model and Problem Statement}\label{sec:model}
We consider the system shown in Fig.~\ref{fig:system}, with an ED enlisting the service of an ES for data classification applications. For the EDs, we focus on resource-constrained devices such as IoT sensors or microcontroller units. 
The ED is embedded with an S-ML which provides lower \textit{inference accuracy}, i.e., the top-1 accuracy, whereas the ES runs an L-ML with higher accuracy. For example, for an image classification application, an S-ML can be a quantized MobileNet~\cite{Howard2017} with a width multiplier of $0.25$; its memory size is $0.5$ MB and has an inference accuracy of $39.5\%$ for classifying ImageNet dataset~\cite{Imagenet2009}, whereas CoCa~\cite{Wortsman2022}, an L-ML, has accuracy $91\%$ and has memory size in the order of GBs. 
The only assumption that we make on the algorithms is that the L-ML is significantly more accurate and costlier than the S-ML. In this paper, the S-ML or the L-ML algorithms can be any classification algorithm including regression algorithms, SVMs, random forests, and DNNs.

Given an arbitrary sequence of $n$ data samples ariving over time at the ED. We assume that each sample first goes through local inference and the decision is made according to the inference results and parameters. This is an essential assumption to facilitate HI, otherwise, the ED cannot infer anything about the sample.
As stated earlier in Section~\ref{sec:intro}, we assume that all the offloaded images will be correctly classified by the L-ML. 
This assumption is not necessary, and we provide an extended analysis in Appendix~\ref{appendixImperfectLML} without this assumption where we consider an imperfect L-ML and include a cost of incorrect inferences at L-ML.
The assumption, however, is justified in practice because the ED doesn't have the ground truth and there is no meaningful method for ED (or ES) to check whether the L-ML inference is correct or not. 
Therefore, it is reasonable that it aims to achieve an inference accuracy as close as possible to that of L-ML by treating the output of the L-ML as the ground truth.
% \color{black}} 

% \textbf{\textit{Model:}} 
\subsection*{System Costs and Feedback Settings}
Let $t$ denote the index of a data sample (e.g., an image), or simply a sample, that arrives $t^{\mathrm{th}}$ in the sequence. Let $p_t$ denote the maximum softmax value output by S-ML for the sample $t$ and the class corresponding to $p_t$ is declared as the true class for computing the top-1 accuracy, which is very typical in a wide variety of classifier algorithms.\footnote{Our framework allows other confidence metrics besides $p_t$ and it does not involve any further modification in the remainder of this work.}
Let binary random variable $Y_t$ denote the cost representing the ground truth that is equal to $0$ if the class corresponding to $p_t$ is the correct class and is equal to $1$, otherwise. Clearly, given an S-ML model, $Y_t$ depends on $\p_t$ and the sample. 

Let $\beta \in [0,1)$ denote the cost incurred for offloading the image for inference at the ES.  This cost, for example, may include the costs for the transmission energy and the idle energy spent by the transceiver till the reception of the inference. 
Note that, if $\beta \geq 1$, then accepting the inference of S-ML, which incurs a cost at most $1$, for all samples will minimise the total cost. This particular cost structure of $\{0,\beta,1\}$ is chosen for easy computations. However, note that any other set of arbitrary costs can be transformed into this form by ignoring the common and hence non-optimisable costs and properly scaling the rest. To understand this, assume the cost of correct S-ML Inference, the cost of offload, and the cost of incorrect inference are $C_{0}, C_{\beta}$, and $C_{1}$, respectively. Also assume that $C_{0}<C_{\beta}<C_{1}$, and that the cost of S-ML inference $C_{0}$ is a common component that is incurred irrespective of the decision and inference outcome. Thus, subtracting this common component and then dividing by $C_{1}-C_{0}$ gives us a zero cost for correct local inference, a unit cost for local incorrect inference, and a cost of $\beta\in(0,1)$ for offload given by
\begin{align}
    \beta=\frac{C_{\beta}-C_{0}}{C_{1}-C_{0}}.\label{eq:beta_normalisation}
\end{align}
Throughout this paper, we consider the offload cost $\beta$ to be a constant known apriori. However, it turns out that a varying beta, $\beta_t,\, t=1,2,\dots$, could also provide similar results, with $\beta$ replaced with the expectation of the sequence $\{\beta_t\}$. This is elaborated as a remark at the end of Appendix~\ref{appendixImperfectLML}.

\begin{table*}[t!]
\centering
    % \resizebox{\linewidth}{!}{ 
    \begin{tabular}{|ll|ll|ll|}
    \hline
    $p_t$   &Maximum softmax value output by S-ML in round $t$ &$\beta$ & normalised offload cost &$\theta_t$ &decision threshold \\
    $Y_t$ &random variable denoting inference correctness&$l(\cdot,\cdot)$&loss function &$L(\cdot,\cdot)$&cumulative loss function\\
    $W_t$& weight function normalisation factor&$w_t(.)$&weight function&$R_n$ &regret\\
    $\lambda_{\mathrm{min}}$&minimum difference between two $p_t$ values&$q_t$&probability of offloading&$p_{[i]}$&$i^{\text{th}}$ smallest distinct $p_t$\\
   $Z_t(\epsilon)$&Bernaulli exploration variable with parameter $\epsilon$&$\eta$&learning rate&$\tilde{l}(\cdot,\cdot)$&pseudo loss function\\
    \hline
    \end{tabular}
    % }
    \caption{{List of symbols.}}
    \label{tab:not}
\end{table*}
As explained in Section~\ref{sec:intro}, in round $t$, we use the following decision rule $\mathfrak{D}_t$ based on the choice of threshold $\theta_t \in [0,1]$:
\begin{align}
\mathfrak{D}_t&=
    \begin{cases}
        \text{Do not offload}&\text{ if }\p_t \geq \theta_t,\\
        \text{Offload}&\text{ if }\p_t <\theta_t.
    \end{cases}\label{eq:thresholdRule}
% \end{equation}
\intertext{Given $p_t$, choosing threshold $\theta_t$ thus results in a cost/loss $l(\theta_t,Y_t)$ at step $t$, where we omit the variable $p_t$ from the function for simplicity in notations. This is given by}
% \begin{align}
l(\theta_t,Y_t) &= 
    \begin{cases}
          Y_t & \p_t \geq \theta_t, \\
        \beta & \p_t < \theta_t.
    \end{cases}\label{eq:cost}
\end{align}
%For a given dataset from which the samples are generated, we assume that the conditional distribution of $Y_t$ given $p_t$, i.e., $\P(Y_t \mid p_t)$ is i.i.d. across rounds. Let  $\mathcal{E}(\p_t) = \P(Y_t = 1 \mid\p_t)$ denote the probability of misclassified samples among the samples for which the S-ML outputs $p_t$. 
We use boldface notations to denote vectors. Let $\bm{Y}_t=\{Y_{\tau}\}, \bm{\theta}_t = \{\theta_\tau\}$, and $\mathbf{p}_t = \{\p_\tau\},\,\tau=1,2,\dots,t\leq n$. 
Further, let $\bm{Y}\coloneqq\bm{Y}_n$, $\bm{\theta}\coloneqq\bm{\theta}_n$ and $\mathbf{p}\coloneqq\mathbf{p}_n$ for convenience. Finally, we define $\lambda_\text{min}$ as the minimum difference between any two distinct softmax values in the sequence $\bm{p}_n$. 
Define the cumulative cost $L(\bm{\theta},\bm{Y})$ as $L(\bm{\theta},\bm{Y})=\sum_{t=1}^{n} l(\theta_t,Y_t)$. 

% \textbf{\textit{Model operationalisation:}}
\subsubsection*{Feedback Settings:}
Recall that the ground truth, available at the ES by virtue of the perfect L-ML, is fed back to the ED for all offloaded samples. However, at the ED, the ground truth ($Y_t$) is not accessible, requiring further modifications and/or assumptions to learn the accuracy of S-ML. In this paper, we consider two scenarios: \textit{full feedback} and \textit{no-local feedback}. The no-local feedback scenario, realistic without additional assumptions, utilises the exploration-exploitation tradeoff to acquire ground truth, by offloading a subset of the samples where a decision to accept the S-ML inference is made. 
% Here, this subset is chosen using a Bernoulli random variable $Z$, with the tradeoff between the number of samples offloaded and the learning rate controlled by the parameter $\epsilon$ of the random variable.

Primarily serving as an analytical precursor to the subsequent discussion, we also present the full feedback scenario, assuming the ground truth's availability at the Edge Device (ED) without any exploration. We start the analysis with this scenario, helping the reader in comprehending the solution approach and subsequently extending the understanding to the more realistic no-local feedback scenario with relative ease. 
It is worth noting that one may still imagine scenarios that closely resembles a full feedback scenario. Hypothetical examples include the system acting as an assistant to a human user in classification, allowing for a binary inference on classification correctness, or a delayed provision of ground truth due to latency or privacy concerns.

% \textbf{\textit{Problem Statement:}} 
\subsection*{Problem Statement}
We are interested in devising online algorithms for learning the optimal threshold that strikes a balance between reducing the number of offloaded images and improving inference accuracy, thereby enhancing the responsiveness and energy efficiency of the system. 
Let $\bm{\theta^*}=\{\theta^*,\theta^*,\dots\}$, a vector of size $n$ with all values $\theta^*$, denote an optimal fixed-$\theta$ policy
and $L(\bm{\theta}^*,\bm{Y})$ denote the corresponding cost. Then,
\begin{align*}
    L(\bm{\theta}^*,\bm{Y}) &= \sum_{t = 1}^{n} l(\theta^*,Y_t), 
    \intertext{where $\theta^*$ need not necessarily be unique and is given by}
    \theta^* &= \argmin_{\theta \in [0,1]} \sum_{t = 1}^{n} l(\theta,Y_t).
\end{align*}
% Let $\bar{L}=\E_\pi[{L}(\bm{\theta},\bm{Y})]$. 
Given a sequence $\bm{Y}$, define the regret as
\begin{align}
    % R_n&=\bar{L}\!-\!L(\bm{\theta}^*,\bm{Y})=\E_\pi\left[L(\bm{\theta},\bm{Y})\right]\!-\!L(\bm{\theta}^*,\bm{Y}),\label{eqn:regretdef}
    R_n&=\E_\pi\left[L(\bm{\theta},\bm{Y})\right]-L(\bm{\theta}^*,\bm{Y}),\label{eqn:regretdef}
\end{align}
where the expectation $\E_\pi[\cdot]$ is with respect to the distribution induced by an arbitrary algorithm $\pi$.

We aim to develop HIL-F (HI Learning with full feedback) and HIL-N (HI Learning with no-local feedback) algorithms for the two scenarios, each with a sublinear upper bound (i.e., a bound approaching $0$ as $n$ goes to $\infty$) on $\E_{\bm{Y}}[R_n]$ -- the expected regret over the distribution of all possible sequences $\bm{Y}$. 
We refer to this bound as an expected regret bound. Note that a regret bound applicable to any given sequence $\bm{Y}$ extends to the expected regret (or even the maximum regret) over all possible sequences of $\bm{Y}$. 
Consequently, for simplicity, we restrict the analysis to a given ${\bm{Y}}$ in the upcoming sections. 
However, in the numerical section, we will present results with the expected average regret $\E_{\bm{Y}}[\frac{1}{n}R_n]=\frac{1}{n}\E_{\bm{Y}}[R_n]$ and the expected average cost $\frac{1}{n}\E_{\bm{Y},\pi}[{L}(\bm{\theta},\bm{Y})]$. 
The averaging over the number of samples $n$ normalizes the maximum to $1$, facilitating easy comparison and removing the dependency on the size of different datasets.

We summarise all the relevant notations in TABLE~\ref{tab:not}.

\section{Background and Preliminary Analysis}\label{sec:background}
% \textbf{\textit{Learning Problems:}} 
\paragraph*{Learning Problems:}
The HI learning problem falls into the category of PEA~\cite{BianchiBook} problems.
In the standard PEA problem, $N$ experts (or actions) are available for a predictor -- known formally as a \textit{forecaster}. When the forecaster chooses an expert, it receives a cost/reward corresponding to that expert. If the cost is only revealed for the chosen expert, then this setting is the MAB. 
%It is proven using counter-examples that in these settings, an adversary can always break a deterministic prediction policy~\cite{BianchiBook}. That is, any optimum forecaster cannot be a deterministic function of past rewards/costs. This can be easily illustrated specifically for our HI setting, but we omit the details due to space constraints.
% and the fact that it is existing results adapted for our specific case. 
In contrast to the standard PEA, we have an uncountable expert space where the expert $\theta_t$ belongs to the continuous space $[0,1]$. 
Continuous action space is well studied in MAB settings, e.g., see~\cite{Auer2007,Bubeck2011,Singh2021}, where the main technique used is to discretise the action space and bound the regret by assuming that the unknown loss function has smoothness properties such as uniformly locally Lipschitz.
However, the problem at hand does not assume any smoothness properties for the loss function. %Further, our problem falls clearly under the PEA but not the standard MAB.

As discussed briefly in Section \ref{sec:intro}, one well-known forecaster for standard PEA is the exponential weighted average forecaster (EWF). For each expert, EWF assigns a weight that is based on the cost incurred for choosing this expert. During each prediction, EWF selects an expert with a probability computed based on these weights. It is known that for $n$ predictions, EWF achieves a regret $\sqrt{n \ln N/2}$. However, the continuous nature of the expert space renders EWF not directly usable for solving the problem at hand. An extension of EWF was considered in \cite{bubeckLecNotes}, and a regret bound for convex losses is obtained for continuous experts, conditioned on a hyperparameter $\gamma>0$. 
Later, a particular $\gamma$ is proposed to get the optimum regret bound of $1 + \sqrt{n\ln n/2}$. 
We, on the other hand, do not require any hyperparameter and, more importantly, do not assume any convexity for the loss function.
% \jpcol{and obtain a bound of \colred{$\sqrt{n\ln(n+1)/2}$} that is of the similar order but with reduced computational complexity.} 
In addition, \cite{bubeckLecNotes} does not describe how to compute the integral required for computing the weights. 
Furthermore, the solution in \cite{bubeckLecNotes} is only applicable to HIL-F with full feedback, but not to HIL-N in which case ours is the first work to the best of our knowledge.

One may discretise $[0,1]$ with a uniform interval length~$\Delta$ and use the standard EWF, where a straightforward substitution of the number of experts $N=1/\Delta$ results in a regret bound of $\sqrt{n\ln ({1}/{\Delta})/2}$.
However, to not sacrifice the accuracy due to this discretisation, one has to take $\Delta$ small enough such that no two probability realisations of $\p_t$ fall within an interval. This is to make sure that the cumulative loss function is constant within each interval, which will be more clear after Lemma~\ref{lem:fixedtheta}. 
Thus, if $\lambda_\mathrm{min}$ is the minimum separation between any two distinct probabilities $p_t,1\leq t\leq n$, the best attainable regret bound of a standard EWF using uniform discretisation is $\sqrt{n\ln({1}/{\lambda_\mathrm{min}})/2}$ with $N=1/\lambda_\mathrm{min}$.
% If we assume that $\{p_t\}$ is uniform in $[0,1]$, the number of experts ${1}/{\Delta}$ thus goes to $N$, which takes the regret bound to \colred{$\sqrt{n\ln (N)/2}$}.
We will soon see that these regret bounds are similar to what we get using our proposed algorithms, but the added complexity with a large number of experts from the first round onwards makes it sub-optimal.

% \begin{corollary}\label{cor:Unifquantisation}
%     If $\lambda_\mathrm{min}$ is the minimum separation between any two distinct probabilities $p_t,1\!\leq\!t\!\leq\!n$, The best attainable regret bound of a standard EWF using uniform discretisation is $\sqrt{n\ln(\frac{1}{\lambda_\mathrm{min}})/2}$.
% \end{corollary}
% % Using the following lemma, we now discuss the maximum achievable regret bound of a standard EWF, if we use uniform discretisation, of the continuous action space.
% % \begin{lemma}
% %     If $\lambda_\mathrm{min}$ is the minimum separation between any two distinct probabilities $p_t,1\!\leq\!t\!\leq\!n$, The best attainable regret bound of a standard EWF using uniform discretisation is $\sqrt{n\ln(\frac{1}{\lambda_\mathrm{min}})/2}$.
% % \end{lemma}
% \begin{proof}

In this paper, we start with the continuous experts and then use the structure of the problem to formulate it in a discrete domain. We propose a non-uniform discretisation that retains the accuracy of a continuous expert while reducing the complexity to the theoretical minimum with at most $n+1$ experts after $n^\text{th}$ round.
Note that, due to the non-uniform discretisation, 
the proposed HIL does not involve $\Delta$, but instead involves $\lambda_\text{min}$, where ${1}/{\lambda_\mathrm{min}}$ acts similar to $N$ in the regret bound. In Section~\ref{sec:FullFeedback}, we provide simple methods to approximate $\lambda_\text{min}$.
% Besides providing sublinear regret bounds, we exploit the structure of the problem to reduce the computational complexity of the algorithms.
%several variants of it to tackle ,  The objective of the forecaster is to minimize regret with respect to the class of fixed-action policies.  
%The problem where the action space is continuous is less frequently studied. To the best of our knowledge, most of the works that studied the continuous action space for MAB, where the main technique is to discretize  the action space and bound the regret with respect to continuous action space by assuming that the cost function has smoothness properties such as Lipschitz continuity. 

%show that the implementation of these algorithms has time complexity $O(t)$ in round $t$.
%Without smoothness properties, as stated before, a straightforward discretisation of the action space and using EWF does not scale well with the discretisation interval both in terms of regret and computation complexity. 

% \textbf{\textit{Preliminary Analysis:}} 
\paragraph*{Preliminary Analysis:}
To choose a good threshold $\theta_t$ in round $t$, we take a hint from the discrete PEA~\cite{BianchiBook} where a weight for an expert is computed using the exponential of the scaled cumulative losses incurred. We extend this idea and define continuous weight function $w_t(\theta)$ as follows:  
%imparted through the weights:
\begin{align}
\allowdisplaybreaks
    w_{t+1}(\theta)=&e^{-\eta \sum_{\tau=1}^{t}l({\theta},{Y}_{\tau})}\nonumber\\
    =&e^{-\eta \sum_{\tau=1}^{t-1}l({\theta},{Y}_{\tau})}e^{-\eta l(\bm{\theta},{Y}_t)}\\
    =&w_{t}(\theta)e^{-\eta l(\theta,{Y}_t)}.\label{eq:w1}\\
    W_{t+1}=&\int_0^1w_{t+1}(\theta)\dif\theta.\label{eq:W1}
\end{align}
Here, $\eta >0$ is the learning rate and $W$ is the normalisation factor.
%Note that we start with $w_1(\theta)$ and $w_{t+1}(\theta)$ is used to make a decision in round $t+1$, and is computed after the costs associated with round $t$ is revealed. 
At each round $t$, the normalised weights give the probability distribution for choosing the next threshold $\theta_{t+1}$, and thus they can be used to learn the system.
However, it comes with two challenges -- (i) finding a (set of) thresholds that follow this distribution and (ii) computing the integral. Although these challenges can be solved using direct numerical methods, they incur a large amount of computational cost. 
For instance, the inverse transformation method can generate a random sample of the threshold with this distribution. 
% This added complexity render the solution practically sub-optimal to a solution with uniform discretisation. 
Instead, we use the facts from \eqref{eq:thresholdRule} and \eqref{eq:cost} that our final decision (to offload or not) depends solely on the relative position of $\theta_t$ and $p_t$, but not directly on $\theta_t$. Thus, using the distribution given by the normalised weights, we define $q_t$ as the probability of \textit{not} offloading, i.e., the probability that $\theta_t$ is less than $p_t$, where
\begin{equation}
q_t = \frac{\int_{0}^{\p_t}w_t(x) \dif x}{W_t} \label{eq_q}.
\end{equation}
Thus, the decision $\mathfrak{D}_t$ from \eqref{eq:thresholdRule} boils down to \textit{do~not~offload} and \textit{offload} with probabilities $q_t$ and $(1-q_t)$, respectively.

% \begin{align}\label{eq:decisionRule}
% \mathfrak{D}_t=\begin{cases}
%          \text{Do not offload,}&\text{ w.p. }q_t \\
%         \text{Offload,}&\text{ w.p. }(1-q_t).
%         \end{cases}
% \end{align}
%It remains to prove our above argument using some bounds for the cost.
%if one followed this decision rule.

Having addressed the first challenge, our focus shifts to finding efficient methods for computing the integral in \eqref{eq_q}. It's worth noting that the cumulative loss function, $L(\bm{\theta}_t,\bm{Y}_t)=\sum_{\tau=1}^{t}l({\theta}_\tau,{Y}_\tau)$, can potentially take $3^t$ different values (due to 0, 1, or $\beta$ cost in each step) without a necessary pattern, making direct analytical integration impractical. To overcome this challenge, we utilize Lemma~\ref{lem:fixedtheta} and transform the integral into a summation by discretizing the domain $[0,1]$ into a finite set of non-uniform intervals.

The non-uniform discretisation suggested by the lemma is incremental and a new interval is (potentially) added in each round. Let’s look at the structure of the weight function after $n$ rounds. 
Let $\p_0=0$ and $\p_{N}=1$, where $N$ is the number of intervals formed in $[0,1]$ by the sequence of probabilities $\bm{\p}_n$. Here, we have $N\leq n+1$ because of the repeated probabilities that do not result in the addition of a new interval. 
We denote these intervals by  ${B}_{i} = (p_{[i-1]},p_{[i]}],1\leq i\leq N$, where $p_{[i]}$ denotes the $i^{\mathrm{th}}$ smallest distinct probability in $\bm{\p}_n$. Let $m_i,1\leq i\leq N$ be the number of times $p_{[i]}$ is repeated in $\bm{\p}_n$. 
For instance, $N = n+1$ and $p_{[i]} = p_i$ iff $m_i = 1\forall i$.
Finally, let $Y_{[i]}, i = 1,2,\dots n$ be the $i^{\mathrm{th}}$ element in the ordered set of local inference costs ordered according to the increasing values of the corresponding probability $p_{i}$. 
Note that, $i$ in $Y_{[i]}$ goes up to $n$ while $i$ in $p_{[i]}$ goes only up to $N$ because any two local inference costs $Y_j$ and $Y_k$ associated with repeated probability values $p_j = p_k$ are two different but i.i.d random variables.
%in the sequence $\p_1,\p_2,\dots,\p_n$.

\begin{lemma}\label{lem:fixedtheta}
The function $L(\bm{\theta},\bm{Y})$ is a piece-wise constant function with a constant value in each interval $B_i$. Furthermore, if there are no repetitions in the sequence $\bm{p}_n$, then 
$$L(\bm{\theta^*},\bm{Y})  
    =\underset{1\leq i \leq n+1}{\min}\left\{(i-1)\beta + \sum_{k = i}^{n} Y_{{[k]}}\right\}.$$
% \begin{align*}
%     L(\bm{\theta^*},\bm{Y})  
%     &\!=\!\min_{1\leq i \leq n+1}\Big\{(i-1)\beta + \sum_{k = i}^{n} Y_{{[j]}}\Big\}.
% \end{align*}
\end{lemma}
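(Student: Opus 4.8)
The plan is to exploit the single-threshold structure of the per-sample loss in \eqref{eq:cost}. First I would fix a sample $t$ and view $l(\theta,Y_t)$ purely as a function of the threshold $\theta\in[0,1]$. From \eqref{eq:thresholdRule} and \eqref{eq:cost} this is a two-valued step function: it equals $Y_t$ whenever $\theta\le p_t$ (the sample is not offloaded, since $p_t\ge\theta$) and jumps to $\beta$ whenever $\theta>p_t$ (the sample is offloaded). Crucially, the single discontinuity sits exactly at $\theta=p_t$, and the value \emph{at} the jump point $\theta=p_t$ is $Y_t$, because the decision rule uses $p_t\ge\theta_t$; this is precisely why the intervals $B_i=(p_{[i-1]},p_{[i]}]$ are taken half-open on the right.

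For the piecewise-constant claim I would then sum over $t$. Every jump location $p_t$ coincides with one of the distinct values $p_{[1]}<\dots<p_{[N]}$, so no jump occurs in the interior of any $B_i$. Hence, for $\theta$ ranging over a single interval $B_i$, each term $l(\theta,Y_t)$ keeps a fixed value -- it is $Y_t$ if $p_t\ge p_{[i]}$ and $\beta$ if $p_t\le p_{[i-1]}$ -- and therefore $L(\bm{\theta},\bm{Y})=\sum_{t}l(\theta,Y_t)$ is constant on $B_i$, establishing the first part.

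To obtain the closed form under the no-repetition assumption, I would identify the constant value on each $B_i$ by counting. With no repetitions the $n$ distinct probabilities generate $N=n+1$ intervals after sorting, and for $\theta\in B_i$ a sample is offloaded exactly when $p_t<\theta$, i.e. when $p_t\le p_{[i-1]}$; there are precisely $i-1$ such samples, each contributing $\beta$. The remaining samples, those with $p_t\ge p_{[i]}$, are not offloaded and contribute $\sum_{k=i}^{n}Y_{[k]}$ in the sorted-by-probability indexing. This gives $L(\bm{\theta},\bm{Y})=(i-1)\beta+\sum_{k=i}^{n}Y_{[k]}$ throughout $B_i$.

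Finally, since the intervals $B_1,\dots,B_{n+1}$ partition $(0,1]$ and $L$ is constant on each, the minimisation $\theta^*=\argmin_{\theta\in[0,1]}\sum_t l(\theta,Y_t)$ over the continuum collapses to a minimisation over the finitely many interval values, yielding $L(\bm{\theta}^*,\bm{Y})=\min_{1\le i\le n+1}\{(i-1)\beta+\sum_{k=i}^{n}Y_{[k]}\}$ (the index in the stated formula should read $Y_{[k]}$). I do not anticipate a serious obstacle; the only point demanding care is the bookkeeping of the boundary and equality cases -- the right-closed intervals together with the artificial endpoints $p_{[0]}=0$ and $p_{[N]}=1$ -- so that each endpoint $\theta=p_{[i]}$ is assigned to the correct interval and the count of offloaded samples is exactly $i-1$ rather than off by one.
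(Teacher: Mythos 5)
Your proof is correct and takes essentially the same route as the paper's: exploit the single jump of each per-sample loss at $\theta = p_t$ to obtain constancy on each right-closed interval $B_i$, count the $i-1$ offloaded samples for $\theta \in B_i$, and collapse the minimization over $[0,1]$ to the finitely many interval values (you also rightly flag that $Y_{[j]}$ in the statement should read $Y_{[k]}$). The only difference is cosmetic: the paper first derives the general expression with repetition multiplicities $m_j$ and then substitutes $m_j=1$, whereas you argue the no-repetition case directly.
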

\begin{proof}
% We have $\p_0\!=0$, $\p_{N}\!=\!1$, and $p_{[i]}$ as the $i$-th smallest probability in the sequence $\bm{p}_n$, not necessarily unique. Order $\{\p_0,\p_{N},\p_i,\!\forall i\!\leq\!n\}$ ascendingly forming $N\!\leq\!n+1$ intervals given by ${B}_{i}\!=\!(p_{[i-1]},p_{[i]}],\forall i\!\leq\!N$. 
% Let $m_i$ be the number of repetitions of $p_{[i]}$ in $\bm{\p}$. Thus, the value of $N$ depends on these repetitions. For instance, $N=n+1$ if $m_i=1\forall i$. 
By definition, $p_t$ falls on the boundary of $B_i,\,\forall t$, for some $i$. Hence, $B_i$ is a subset of either $(0,\p_t]$ or $(\p_t,1]$.
\begin{equation}\label{eq:l}
\Rightarrow l(\theta_t,Y_t) = 
    \begin{cases}
          Y_t,&\forall\theta:\theta_t\in B_i\subset(0,\p_t],\text{ and}\\
        \beta,&\forall\theta:\theta_t\in B_i\subset(\p_t,1].
    \end{cases}
\end{equation}
Thus, $\forall\,i\leq N,\; l(\theta,Y_t)\coloneqq l(B_i,Y_t), \forall\theta\in B_i.$
That is, the cost for all $\theta$ within an interval $B_i$ takes a constant value of $l(B_i,Y_t)$, and this value depends on whether $p_{[i]}$ (the upper boundary of $B_i$) is greater than $\p_t$ or not. 
To prove the second part, note that $L(\theta,\bm{Y}) = \sum_{t=1}^n l(B_i,Y_t):\theta\in B_i.$
% To prove the second part of the lemma, first note from th
% Let $m_i$ be the number of repetitions of $p_{[i]}$ in $\bm{\p}$. Thus, the value of $N$ depends on these repetitions. For instance, $N=n+1$ if $m_i=1\forall i$. Therefore, $\forall i\leq N$,
\begin{align*}
    \Rightarrow L(\bm{\theta^*},\bm{Y})&=\min_{\theta\in[0,1]}L(\theta,\bm{Y})=\min_{1\leq i\leq N}\sum_{t=1}^n l(B_i,Y_t).\\
% L(\theta,\bm{Y})=\sum_{t=1}^n l(\theta,Y_t)=\sum_{t=1}^n l(B_i,Y_t)
    \sum_{t=1}^n l(B_i,Y_t)&=\sum_{t=1}^n[\beta\,\mathbbm{1}(\p_t<p_{[i]})+Y_t\,\mathbbm{1}(\p_t\geq p_{[i]})]\\
    &=\beta\sum_{j=1}^{i-1}m_j+\smashoperator[r]{\sum_{k=1+\sum_{j=1}^{i-1}m_j}^{n}}Y_{[k]}\numberthis\label{intervalcost}\\
        % &=\smashoperator[r]{\sum_{k=1+\sum_{j=1}^{i-1}m_j}^{N}}Y_{[k]}+\beta\sum_{j=1}^{i-1}m_j\numberthis\label{intervalcost}
% $\sum_{t=1}^n \mathbb{E}_{Y_t}[l(B_i,Y_t)]=(i-1)\beta+\sum_{j=i}^{n+1}\mathcal{E}(\p_{[j]})$
% \end{align*}
% \begin{align*}
 \Rightarrow L(\bm{\theta^*},\bm{Y})&=\min_{1\leq i\leq N}\Big\{\beta\sum_{j=1}^{i-1}m_j+\smashoperator[r]{\sum_{k=1+\sum_{j=1}^{i-1}m_j}^{n}}Y_{[k]}\Big\}
 % \\\Rightarrow L(\bm{\theta^*},\bm{Y}\vert m_j = 1,\forall j)&=\min_{1\leq i\leq n+1}\Big\{\beta(i-1)+\sum_{k=i}^{n}Y_{[k]}\Big\}.
\end{align*}
% The second step above follows from the fact that within an interval $B_i$, the cost is same $\forall\theta$. In the last step we have used \eqref{intervalcost}. 
When there are no repetitions, we can substitute $m_j = 1,\forall j$ in the above expression to complete the proof.
\end{proof}

From Lemma~\ref{lem:fixedtheta}, we infer that the weight function is constant within the intervals defined by $\bm{p}_t$ for any $t$, and we can compute the integral in \eqref{eq_q} by adding multiple rectangular areas formed by the length of each interval $B_i$ and the corresponding constant weight within it. 
Thus, by converting the integral of $w_t(\theta)$ in a continuous domain to a summation of areas of rectangles with non-uniform bases, we not only reduce the complexity but~also~do~that without sacrificing the accuracy of the results. We will discuss more on the computational complexity later in Section~\ref{sec:complexity}.
It is worth noting that the property of the piece-wise nature -- given by the first part of Lemma~\ref{lem:fixedtheta} -- is not only valid for the particular loss function $l({\theta_t},{Y_t})$, but also for any other loss function with a single decision boundary (as in \eqref{eq:cost}) and discrete costs on either side of this boundary. This becomes important when we use a modified loss function for finding the optimum decision boundary $\theta^*$ for the HIL-N case in Section~\ref{sec:NoFeedback}.

%Now, we see how to use the above fact in a hypothetical offline learning problem where the sequence $\bm{p}$ is available a priori.
% , and in a case where one imploys standard EWF after uniform discretisation of the continuous action space.
Consider the scenario where  $\bm{p}_n$ is known a priori. 
We can then use the standard EWF with $N$ intervals with the cost corresponding to interval $B_i$ as defined in~\eqref{eq:l}. 
The following Corollary states the regret bound for this algorithm.

\begin{corollary}\label{cor:regretLB}
If the sequence $\bm{p}_n$ is known a priori, an EWF that uses the intervals $B_i$ as experts achieves $\sqrt{n \ln N /2}$ regret bound. Consequently, given that $N = O(n)$, the regret bound of EWF is $O(\sqrt{n \ln n})$. 
\end{corollary}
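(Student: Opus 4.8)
The plan is to reduce the continuous-expert problem to a standard finite-expert PEA instance and then invoke the classical EWF regret guarantee. Since $\bm{p}_n$ is assumed known a priori, the $N$ intervals $B_i = (p_{[i-1]}, p_{[i]}]$ can be constructed before the first round, so they legitimately serve as a fixed finite set of experts throughout the horizon. The first step is to appeal to Lemma~\ref{lem:fixedtheta}: within each interval $B_i$ the per-round loss $l(\theta, Y_t)$ is constant and equals $l(B_i, Y_t)$, which is either $Y_t$ or $\beta$. Because $Y_t \in \{0,1\}$ and $\beta \in [0,1)$, these losses lie in $[0,1]$, precisely the regime required by the standard EWF analysis. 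Thus treating each interval $B_i$ as an expert incurring loss $l(B_i, Y_t)$ at round $t$ defines a bona fide PEA problem with $N$ experts and losses bounded in $[0,1]$.

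Next I would run EWF on these $N$ experts with learning rate tuned to $\eta = \sqrt{8 \ln N / n}$ and invoke the textbook EWF bound~\cite{BianchiBook}, which states that the cumulative loss of EWF exceeds that of the best fixed expert by at most $\sqrt{n \ln N / 2}$. The crucial point — and the step that makes this a genuine corollary rather than a mechanical restatement — is verifying that the comparator of the discretized instance coincides with the comparator of the original continuous problem. Here Lemma~\ref{lem:fixedtheta} does the heavy lifting: because $L(\bm{\theta}, \bm{Y})$ is piecewise constant on the $B_i$, the minimum over the continuum $\theta \in [0,1]$ equals the minimum over the finite collection of intervals, so that $L(\bm{\theta}^*, \bm{Y}) = \min_{1 \le i \le N} \sum_{t=1}^{n} l(B_i, Y_t)$. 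Consequently, the best fixed expert of the finite instance achieves exactly the cost of the optimal fixed-$\theta$ policy, and the EWF regret measured against the finite comparator is identical to the regret $R_n$ defined in~\eqref{eqn:regretdef} for the original problem. In particular, \emph{no} error is introduced by the discretization, which is the whole reason the non-uniform interval construction is used.

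Finally, I would close the argument by counting experts. Because repeated probability values collapse adjacent intervals, we have $N \le n+1 = O(n)$; substituting into the bound gives $\sqrt{n \ln N / 2} = O(\sqrt{n \ln n})$, as claimed. The main obstacle in the whole proof is the comparator-alignment step in the second paragraph: one must be careful that the finite-expert benchmark $\min_i \sum_t l(B_i, Y_t)$ is not merely an approximation of $L(\bm{\theta}^*, \bm{Y})$ but is exactly equal to it, which is guaranteed only by the piecewise-constant structure established in Lemma~\ref{lem:fixedtheta}. Everything else is a direct application of the known EWF guarantee together with the bound $N = O(n)$.
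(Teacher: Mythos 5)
Your proposal is correct and follows exactly the route the paper intends: the corollary is presented as an immediate consequence of Lemma~\ref{lem:fixedtheta} (piecewise constancy giving exact comparator alignment with no discretization error) combined with the textbook EWF bound $\sqrt{n\ln N/2}$ for $N$ experts with losses in $[0,1]$, plus the count $N \leq n+1$. Your explicit verification of the comparator-alignment step is precisely the content the paper leaves implicit, so there is nothing to add.
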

% \begin{align}
%     \underset{\{Sequences\}}{Max} (Regret)\leq R\\
%     R\geq \sqrt{n \ln N /2}\leq\sqrt{n \ln n /2}\approx O(\sqrt{n\ln n})
% \end{align}

% According to [36], the worst case regret bound (over all sequences) of any randomised algorithm ca

 Note that, for the standard PEA with $N$ experts, $\sqrt{n \ln N /2}$ is the lower bound for the regret bound for any randomised algorithm~\cite{Cesa-Bianchi1997}. Thus, 
 Corollary~\ref{cor:regretLB} implies that for the problem at hand, under a general loss function, any randomised algorithm has regret at least $\Omega(\sqrt{n \ln n})$. Clearly, the lower bound $\Omega(\sqrt{n \ln n})$ is much higher than the lower bound of PEA, where the number of experts is independent of $n$. This establishes the hardness of our problem, which is imparted due to the dynamic and increasing nature of the number of experts. Adding to the difficulty, $O(\sqrt{n \ln n})$ can only be achieved if $\bm{p}_n$ is known a priori, which is not the case in practice.
%\begin{corollary}\label{cor:quantisation}
%If the probability realisations $p_t,1 \leq\!t\!\leq\!n$ is uniformly distributed, the regret bound after a large enough $n$ is given by $\sqrt{n\ln(n+1)/2}$.
%\end{corollary}
% \begin{algorithm}
% \caption{Offline HIL algorithm when the sequence $\bm{p}$ is known a priori.}\label{alg:OfflineHIL}
% \begin{algorithmic}[1]
% \STATE For $t = 1, 2, \ldots$
% \IF {$\beta < \mathcal{E}(\p_t) = \P(Y = 1 \mid \p_t)$}
% \STATE Choose any $\theta^*_t > \hat{p}_t$ and offload the sample.
% \ELSE 
% \STATE Choose any $\theta^*_t \leq \hat{p}_t$ and accept the inference of TinyML.
% \ENDIF
% \end{algorithmic}
% \end{algorithm}

With all preliminaries covered, we now present the HIL algorithms and their regret bounds for full feedback and no-local feedback scenarios in Sections~\ref{sec:FullFeedback} and~\ref{sec:NoFeedback}, respectively.

\section{Full Feedback}\label{sec:FullFeedback}
In this section, we consider the full-feedback scenario, where the algorithm receives the ground truth $Y_t$ for all the samples, including those that are not offloaded by accepting the S-ML inference. 
For this scenario, we present the HIL-F algorithm in Algorithm \ref{alg:HIL}. Some algorithmic rules for the parameter updates are given later in Section~\ref{sec:complexity}.
As explained in the previous section, given $p_t$, we compute $q_t$, the probability of not offloading. Once the decision is made using $q_t$, the costs are received and the weights are updated using~\eqref{eq:w1} and~\eqref{eq:W1}. For simplicity, we denote the expected cost received by HIL-F in round $t$ by $\bar{l}(Y_t)$ and is given by 
\begin{align*}
    % \color{red}{\bar{l}(\theta_t,Y_t)\color{black}=\E_{Q_t}[{l}(\theta_t,Y_t)] = Y_t q_t + \beta (1-q_t),}\\
    \bar{l}(Y_t)=\E_{Q_t}[{l}(\theta_t,Y_t)] = Y_t q_t + \beta (1-q_t), 
\end{align*}
where the expectation is with respect to the probability distribution dictated by $q_t$. 
Also, let $\bar{L}(\bm{Y}) = \sum_{t=1}^n \bar{l}(Y_t)$ denote the total expected cost after $n$ rounds. %denote the expected total cost incurred by HIL-F by using a set of thresholds $\bm{\theta}$ and on an input sequence $\bm{Y}$, where the expectation is is taken with respect to the probability distributions dictated by the probabilities $q_t, t=1,2,\dots,n$.
In the theorem below, we provide a regret bound for HIL-F.
\begin{theorem}\label{thm1}
For $\eta > 0$, HIL-F achieves the following regret bound:
\begin{equation}
    R_n=\bar{L}(\bm{Y})  - L(\bm{\theta^*},\bm{Y}) \leq\frac{1}{\eta}\ln\frac{1}{\lambda_{\mathrm{min}}}+\frac{n\eta}{8}.\label{eq:Rn_HILF}
\end{equation}
% where $\bar{L}(\bm{\theta},\bm{Y})$ is the total loss incurred by using HIL-F.
% Here, by ${l}_t(Y_t)$, we refer to the costs received in round $t$ by following HIL-F, and 
\end{theorem}
\begin{proof}
    Proof of Theorem~\ref{thm1} is given in Appendix~\ref{appendixThm1}.
\end{proof}

\begin{algorithm}[t]
\caption{The HIL-F algorithm for full feedback.}\label{alg:HIL}
\begin{algorithmic}[1]
\STATE Initialise: Set $w_1(\theta) = 1, \forall \theta \in [0,1]$ and $N=1$.
% \STATE Initialise: $w_1(\theta) = 1, \forall \theta \in [0,1]$, and $W_1 =1$.
\FOR {every sample in round $t = 1,2,\ldots$}
\STATE S-ML outputs $\p_t$.
\STATE Compute $q_t$ using \eqref{eq:W1} and \eqref{eq_q}, and generate Bernoulli random variable $Q_t$ with $\P(Q_t = 1) = q_t$.
\IF {$Q_t=1$}
\STATE Accept the S-ML inference and
    receive cost $Y_t$.
% \\Set $\theta_t$ to any value from $[0,\p_t]$.
\ELSE 
\STATE Offload the sample and receive cost $\beta$.
% \\Set $\theta_t$ to any value from $(\p_t,1]$.
\ENDIF
\STATE Find the loss function using \eqref{eq:cost}.
\IF {$p_t$ is not a repetition}
\STATE Update the intervals by splitting the interval containing $p_t$, at $p_t$. Increment $N$ by $1$.
\ENDIF
\STATE Update the weights for all intervals using \eqref{eq:w1}, based on the interval positions with respect to $p_t$. 
% \STATE Update the weights using \eqref{eq:w1} (algorithmic rules for implementation is in Section~\ref{sec:complexity}).
% \begin{align*}
% w_{t+1}(\theta) &= w_{t}(\theta) e^{-\eta l(\theta,Y_t)},\\ 
% W_{t+1} &= \int_{0}^{1} w_{t+1}(x)\dif x.
% \end{align*}
\ENDFOR
\end{algorithmic}
\end{algorithm}

Here, $\eta$ is the learning rate of the algorithm. To find $\eta^*$, the $\eta$ that minimises the above regret bound, we differentiate the regret $R_n$ in \eqref{eq:Rn_HILF} to obtain 
\begin{align}
\eta^*=\sqrt{\frac{8\ln(1/\lambda_{\min})}{n}}.\label{eqn:etaOpt}
\end{align}
Substituting \eqref{eqn:etaOpt} in \eqref{eq:Rn_HILF}, we get $R_n=\sqrt{n\ln(1/\lambda_{\min})/2}$.

What remains is to find an approximation for $\lambda_{\min}$,
%\begin{equation*}
%    \eta^*=\sqrt{\frac{8\ln(1/\lambda_{\min})}{n}}.
%\end{equation*}
which is possible through various methods. 
For instance, one can use the precision of the probability outputs, i.e., if the probability outputs are truncated to 6 decimal places, then we know that $\lambda_{\min}\geq10^{-6}$.
%with equality attained with a large number of $n$. 
Further, some datasets and/or S-ML models come with specific $\lambda_{\min}$. For example, the probability output by MobileNet on the Imagenette dataset is 8-bit and hence the probabilities are integer multiples of $1/256$. Even in cases where all these methods fail, we see that a decent approximation for $\lambda_{\min}$ is $\hat{\lambda}_{\min}={1}/{(n+1)}$.
% with which the regret bound will become $R_n\leq\sqrt{{n\ln(n+1)}/{2}}$. 
% \color{red} Later in Section~\ref{sec:simulations}, we show that the regret difference incurred for choosing ${1}/{(n+1)}$ as the estimate is minimal. 
% {\color{red}Recall from Section~\ref{sec:background} that this regret bound is similar to the optimum regret bound with uniform discretization with $\Delta=\hat{\lambda}_{\min}$. 
% % sThat is, we reduce the complexity without sacrificing the performance. We discuss more about complexity in Section~\ref{sec:complexity}. 
% \color{black}
\section{No-Local Feedback}\label{sec:NoFeedback}
Under no-local feedback, the cost is unknown once the inference of the S-ML is accepted. For this scenario, we use the randomisation idea used for label efficient prediction problem~\cite{Bianchi2005}, which is a variant of the PEA, where the costs in each round are not revealed, unless they are inquired for, and there can only be at most $m$ inquires that can be made. For this variant, EWF is modified as follows: in each round, a Bernoulli random variable $Z$ is generated with probability $\epsilon$. If $Z = 1$, then feedback is requested and the costs are revealed. However, for our problem, the algorithm for the label-efficient prediction problem is not applicable due to the aspect of continuous expert space. Further, we do not have the notion of inquiring about the costs at the ED. Instead, when $Z = 1$, the sample has to be offloaded to the ES with cost $\beta$ irrespective of the original decision made using $q_t$. These samples provide the ED with the inference using which the ED computes the cost $Y_t$.

To address the above aspects we follow the design principles of HIL-F and use non-uniform discretisation of the continuous domain and propose the HI algorithm for no-local feedback (HIL-N), which is presented in Algorithm~\ref{alg:HILPartial}. Even though HIL-N and HIL-F have a similar structure, the design of HIL-N is significantly more involved and has the following key differences with \mbox{HIL-F}. Firstly, in line $5$ of Algorithm~\ref{alg:HILPartial}, a Bernoulli random variable $Z_t$ is generated with probability $\epsilon$. If $Z_t = 1$, then the sample is offloaded even if $Q_t = 1$, and thus $Y_t$ is realised in this case. This step is used to control the frequency of additional offloads carried out to learn the ground truth $Y_t$. Secondly, instead of the loss function, the weights are updated using a \textit{pseudo loss function} $\tilde{l}(\theta_t,Y_t)$ defined as follows:
\begin{align}
    \tilde{l}(\theta_t,Y_t) &=
    \begin{cases}
          0 & \p_t \geq\theta_t, Z_t = 0;\;[\text{\textit{Do Not Offload}}]\\
          \frac{Y_t}{\epsilon} & \p_t \geq\theta_t, Z_t = 1;\;[\text{\textit{Offload}}]\\
        \beta &  \p_t<\theta_t.\qquad\quad\;\;\;[\text{\textit{Offload}}]
    \end{cases}\label{eq:newcost}
\end{align}
We also update the equations \eqref{eq:w1}, \eqref{eq:W1} and \eqref{eq_q} as follows:
 \begin{align} 
 w_{t+1}(\theta)&=  w_{t}(\theta) e^{-\eta \tilde{l}_{t}(\theta,Y_t)},\label{eq:newWeights} \\
    W_{t+1}&=\int_0^1     w_{t+1}(\theta)\dif\theta, \text{ and }\label{eq:W1_new}\\
    q_t&=\frac{\int_{0}^{\p_t}w_t(x) \dif x}{W_t} \label{eq_q_new}.
\end{align}
We emphasise that the pseudo loss function $\tilde{l}(\theta_t,Y_t)$ is used only as part of the HIL-N algorithm, and is not the actual cost incurred by the ED.
The actual cost remains unchanged and it depends only on the offloading decision and the correctness of the inference if not offloaded.
However, this actual incurred cost or the corresponding loss function $l(\theta_t,Y_t)$ is unknown for the no-local feedback scenario, whenever the sample is not offloaded and the local inference is accepted. 
This is precisely the reason to introduce the pseudo loss function $\tilde{l}(\theta_t,Y_t)$ which is known in each $t$, and can be used in the HIL-N algorithm to update the weights. 
Recall from Section~\ref{sec:FullFeedback} that in HIL-F, the cost incurred and the cost used to update the weights are the same, and the incurred cost is $\beta$ if and only if $p_t<\theta_t$.
However in HIL-N, we use the pseudo cost to update the weights, and thus the actual cost incurred can be equal to $\beta$ even if $ p_t\geq\theta_t$. However, we designed the pseudo-loss function such that 
\begin{equation}\label{eq:unbiased}
    \E_{Z}\left[\tilde{l}(\theta_t,Y_t)\right] = l(\theta_t,Y_t). 
\end{equation}
Therefore, the pseudo loss function is an unbiased estimate of the actual loss function, a fact that we will facilitate our analysis. Further, with the addition of a random variable $Q$, the regret for HIL-N can be rewritten as
\begin{align}
    R_n =  \E_{\Q Z}[L(\bm{\theta},\bm{Y})] - L(\bm{\theta^*},\bm{Y}),\label{regre_hiln}
\end{align}
where $\E_{\Q Z}[\cdot]$ is expectation with respect to random variables $\{Q_1,Q_2,\ldots,Q_n\}$ and Bernoulli random variable $Z$.

\begin{theorem}\label{thm2}
For $\eta,\epsilon > 0$,  HIL-N achieves the regret bound 
\begin{align}
    R_n \leq n\beta \epsilon + \frac{n\eta}{2\epsilon} + \frac{1}{\eta}\ln( 1/\lambda_\text{min}).\label{eq:thm2}
\end{align}
\end{theorem}
\begin{proof}
    Proof of Theorem~\ref{thm2} is given in Appendix~\ref{appendixThm2}.
\end{proof}
\begin{algorithm}[t]
\caption{The HIL-N algorithm}\label{alg:HILPartial}
% \resizebox{\linewidth}{!}{
\begin{algorithmic}[1]
\STATE Initialise: Set $w_1(\theta) = 1, \forall \theta \in [0,1]$ and $N=1$.
% \STATE Initialise: set $w_1(\theta) = 1,\forall\theta; N=1;$ and $W_1 = 1.$
\FOR{$t = 1,2,\ldots$}
\STATE S-ML outputs $\p_t$.
\STATE Compute $q_t$ using weights from \eqref{eq:newWeights} and \eqref{eq:W1_new} and generate Bernoulli random variables \\$Q_t$ and $Z_t$ with $\P(Q_t = 1) = q_t$ and $\P(Z_t = 1) = \epsilon$.
\IF{$Q_t = 1$ \AND $Z_t=0$}
\STATE Accept the S-ML inference and receive cost $Y_t$ (unknown). 
\ELSE
\STATE  Offload the sample and receive cost $\beta$.
\ENDIF
% \IF {$Q_t = 1$}
% % \STATE Set $\theta_t$ to any value from $[0,\p_t]$.
% \IF{$Z_t = 1$}
% \STATE  Offload the sample, and receive cost $\beta$.
% \ELSE
% \STATE Accept the S-ML inference, and\\ receive cost $Y_t$ (unknown). 
% \ENDIF
% \ELSE 
% \STATE Offload the sample, and receive cost $\beta$.
% \ENDIF
\STATE Find the pseudo loss function using \eqref{eq:newcost}.
\IF {$p_t$ is not a repetition}
\STATE Update the intervals by splitting the interval containing $p_t$ at $p_t$. Increment $N$ by $1$.
\ENDIF
\STATE Update the weights for all intervals using \eqref{eq:newWeights}, based on the interval positions with respect to $p_t$.
% \begin{align*}
% w_{t+1}(\theta) &= w_{t}(\theta) e^{-\eta \tilde{l}_{t}(\theta,Y_t)} \; \forall \theta,\\ 
% W_{t+1} &= \int_{0}^{1} w_{t+1}(x) \dif x.   
% \end{align*}
% \begin{alignat*}{2}
% w_{t+1}(\theta) &= w_{t}(\theta) e^{-\eta \tilde{l}_{t}(\theta,Y_t)} \; \forall \theta,\qquad\qquad&(\refstepcounter{equation}\arabic{equation})\\ 
% W_{t+1} &= \int_{0}^{1} w_{t+1}(x) \dif x.\qquad\qquad&(\refstepcounter{equation}\arabic{equation})  
% \end{alignat*}
\ENDFOR
\end{algorithmic}
\end{algorithm}

The bound in Theorem~\ref{thm2} neatly captures the effect of $\epsilon$ on the regret. Note that the term $n \beta \epsilon$ is a direct consequence of offloading sample $t$, when $Z_t = 1$. Additionally, it is noteworthy that the bound for HIL-N in Theorem~\ref{thm2} exhibits similarity to the previously obtained bound for HIL-F in Theorem~\ref{thm1}. Both bounds share a comparable relationship with dependent parameters such as $n, \eta, \lambda_{\text{min}}$, among others. The additional terms in the HIL-N bound, rendering it a looser bound, result from the exploration aspect, where the correctness of the inference is available only for a subset of rounds determined by the Bernoulli parameter $\epsilon$.

% Also, it is worth noting the similarity of the bound for HIL-N in Theorem~\ref{thm2} with that of the bound obtained earlier for HIL-F in Theorem~\ref{thm1}. Both these bounds have a similar relation with the dependent parameters, like $n, \eta, \lambda_{\text{min}}$, etc. The extra terms in the bound for HIL-N that makes it a looser bound are the result of the exploration, where the correctness of the inference is not available at all rounds but only on a subset of the rounds that are determined by the Bernoulli parameter $\epsilon$.
We now minimise this bound for HIL-N and find the parameters that render the bound to be sublinear in $n$. Denote the bound in Theorem~\ref{thm2} by $g(\epsilon,\eta)$. We have,
\begin{align*}
    g(\epsilon,\eta) = n\beta \epsilon + \frac{n\eta}{2\epsilon} + \frac{1}{\eta}\ln (1/\lambda_\text{min}).\numberthis\label{eq:noInfo_Bound}
\end{align*}

\begin{lemma}\label{lem2}
The function $g(\epsilon,\eta)$  defined in \eqref{eq:noInfo_Bound} has a global minimum at $(\epsilon^*,\eta^*)$, where $\eta^*=\left(\frac{2\ln^2({1}/{\lambda_{\text{min}}})}{\beta n^2}\right)^{{1}/{3}}$ and 
$\epsilon^* = \sqrt{\frac{\eta}{2\beta}}.$ At this minimum, we have, $$g(\epsilon^*,\eta^*) =3n^{{2}/{3}}\left(\frac{\beta\ln({1}/{\lambda_{\text{min}}})}{2}\right)^{{1}/{3}}.$$
\end{lemma}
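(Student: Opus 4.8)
The plan is to treat this as an unconstrained minimization of the smooth function $g$ over the open positive orthant $\{(\epsilon,\eta):\epsilon,\eta>0\}$, and to exploit the fact that the three summands of $g$ decouple cleanly under the first-order conditions. Writing $c:=\ln(1/\lambda_{\text{min}})$ for brevity, I would first minimize over $\epsilon$ with $\eta$ held fixed. For fixed $\eta$, the map $\epsilon\mapsto n\beta\epsilon+\frac{n\eta}{2\epsilon}+\frac{c}{\eta}$ is strictly convex on $(0,\infty)$ (the term $1/\epsilon$ is convex and the remainder is affine in $\epsilon$) and diverges as $\epsilon\to0^+$ and as $\epsilon\to\infty$, so its unique stationary point is the global minimizer. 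Setting $\partial_\epsilon g=n\beta-\frac{n\eta}{2\epsilon^2}=0$ yields $\epsilon^*(\eta)=\sqrt{\eta/(2\beta)}$, which is exactly the claimed relation $\epsilon^*=\sqrt{\eta/(2\beta)}$.

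Next I would substitute $\epsilon^*(\eta)$ back to obtain a one-variable profile function. A short computation shows the first two terms become equal, giving $\tilde g(\eta):=g(\epsilon^*(\eta),\eta)=n\sqrt{2\beta\eta}+\frac{c}{\eta}$. This $\tilde g$ is smooth and coercive on $(0,\infty)$, since $\tilde g(\eta)\to\infty$ both as $\eta\to0^+$ (through $c/\eta$) and as $\eta\to\infty$ (through $\sqrt{\eta}$); hence it attains its global minimum at an interior critical point. Solving $\tilde g'(\eta)=\frac{n\sqrt{2\beta}}{2}\eta^{-1/2}-c\eta^{-2}=0$ gives $\eta^{3/2}=\frac{2c}{n\sqrt{2\beta}}$, i.e. $\eta^*=\left(\frac{2c^2}{\beta n^2}\right)^{1/3}$, matching the stated value; uniqueness of this positive root confirms global optimality of $\tilde g$. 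Since $\min_{\epsilon,\eta}g=\min_\eta\min_\epsilon g=\min_\eta\tilde g(\eta)$, the pair $(\epsilon^*(\eta^*),\eta^*)$ is the global minimizer of $g$.

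For the minimum value, the cleanest route is to observe that at the joint critical point the two first-order conditions force all three summands of $g$ to coincide: $\partial_\epsilon g=0$ gives $n\beta\epsilon^*=\frac{n\eta^*}{2\epsilon^*}$, and $\partial_\eta g=0$ gives $\frac{n\eta^*}{2\epsilon^*}=\frac{c}{\eta^*}$. Hence $g(\epsilon^*,\eta^*)=3\,n\beta\epsilon^*$, and substituting $\epsilon^*=\sqrt{\eta^*/(2\beta)}$ together with the value of $\eta^*$ collapses the expression to $n\beta\epsilon^*=n^{2/3}\left(\frac{\beta c}{2}\right)^{1/3}$ after collecting the powers of $n$, $\beta$, and $c$, which yields the claimed $g(\epsilon^*,\eta^*)=3n^{2/3}\left(\frac{\beta\ln(1/\lambda_{\text{min}})}{2}\right)^{1/3}$.

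I expect the only genuine subtlety to be the global (as opposed to merely local) optimality claim. The function $g$ is not jointly convex, because the cross term $\eta/\epsilon$ has an indefinite Hessian, so one cannot simply invoke convexity to upgrade a stationary point to a global minimum. I would sidestep this entirely via the nested-minimization argument above, in which each one-dimensional subproblem is either strictly convex or coercive with a unique critical point, so that composing the two exact minimizations certifies global optimality without ever requiring joint convexity.
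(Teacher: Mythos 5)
Your proof is correct, and it reaches global optimality by a genuinely different route than the paper. The paper works with the joint stationarity system: it notes strict convexity of $g$ in each variable separately, solves $\partial g/\partial\epsilon=0$ and $\partial g/\partial\eta=0$ simultaneously, argues the two stationarity curves intersect exactly once, and then checks that the determinant of the Hessian at that intersection is positive (a computation it omits for space) before declaring the unique critical point to be the global minimum. You instead minimize sequentially: exact minimization over $\epsilon$ for fixed $\eta$ (one-dimensional strict convexity plus divergence at both ends of $(0,\infty)$), substitution to get the profile function $\tilde g(\eta)=n\sqrt{2\beta\eta}+c/\eta$ with $c:=\ln(1/\lambda_{\text{min}})$, and then coercivity plus a unique critical point in $\eta$. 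This buys you something real: as you correctly observe, $g$ is not jointly convex (the cross term $\eta/\epsilon$ has an indefinite Hessian), and on an open unbounded domain a unique critical point that is a local minimum does not automatically yield a global minimum; the paper's closing inference (``since $(\epsilon^*,\eta^*)$ is a unique minimum, it should be the global minimum'') implicitly relies on behavior at the boundary that your nested decomposition makes explicit, and you avoid the second-order Hessian verification entirely. Your evaluation of the minimum value is also cleaner: noting that the two first-order conditions force all three summands of $g$ to coincide, so that $g(\epsilon^*,\eta^*)=3n\beta\epsilon^*$, replaces the paper's direct back-substitution with a one-line identity. Both arguments arrive at the same $(\epsilon^*,\eta^*)$ and the same optimal value.
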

\begin{proof}
    Proof of Lemma~\ref{lem2} is given in Appendix~\ref{appendixLemma}.
\end{proof}

Substituting the optimum parameters given by the the above Lemma in \eqref{eq:noInfo_Bound}, we obtain a sublinear regret bound for HIL-N. This is given in the following corollary.

\begin{corollary}\label{corollary2}
With $\eta = \left(\frac{2\ln^2({1}/{\lambda_{\text{min}}})}{\beta n^2}\right)^{{1}/{3}}$ and $\epsilon = \min\{1,\sqrt{\frac{\eta}{2\beta}}\}$, HIL-N achieves a regret bound sublinear in $n$:
$$R_n \leq 3n^{{2}/{3}}\left(\frac{\beta\ln({1}/{\lambda_{\text{min}}})}{2}\right)^{{1}/{3}}$$
\end{corollary}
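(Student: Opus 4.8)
The plan is to derive this corollary almost immediately from Theorem~\ref{thm2} together with Lemma~\ref{lem2}; no new technical machinery is needed, and the only real care required is in respecting the constraint that $\epsilon$ must be a valid probability, i.e.\ $\epsilon\in(0,1]$. First I would recall that Theorem~\ref{thm2} gives $R_n\le g(\epsilon,\eta)$ for every pair $\eta,\epsilon>0$, with $g$ as in \eqref{eq:noInfo_Bound}, and that Lemma~\ref{lem2} identifies the unconstrained global minimizer $(\epsilon^*,\eta^*)$ of $g$, with $\eta^*=\bigl(2\ln^2(1/\lambda_{\text{min}})/(\beta n^2)\bigr)^{1/3}$, $\epsilon^*=\sqrt{\eta^*/(2\beta)}$, and minimal value $g(\epsilon^*,\eta^*)=3n^{2/3}\bigl(\beta\ln(1/\lambda_{\text{min}})/2\bigr)^{1/3}$. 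Setting $\eta=\eta^*$ and substituting into Theorem~\ref{thm2} gives $R_n\le g(\epsilon,\eta^*)$, and choosing $\epsilon=\epsilon^*$ would then yield exactly the claimed bound.

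The subtlety I would flag as the main obstacle is that the unconstrained optimizer $\epsilon^*=\sqrt{\eta^*/(2\beta)}$ is not guaranteed to lie in $(0,1]$, whereas $\epsilon>1$ is meaningless as the probability $\P(Z_t=1)$. This is precisely why the corollary replaces $\epsilon^*$ with the clamped value $\epsilon=\min\{1,\sqrt{\eta^*/(2\beta)}\}$. I would resolve it by observing that $\eta^*\to 0$ as $n\to\infty$, hence $\epsilon^*=\sqrt{\eta^*/(2\beta)}\to 0$, so for all $n$ large enough the clamp is inactive and $\epsilon=\epsilon^*$. In that regime $R_n\le g(\epsilon^*,\eta^*)$, and Lemma~\ref{lem2} supplies the closed form $3n^{2/3}\bigl(\beta\ln(1/\lambda_{\text{min}})/2\bigr)^{1/3}$, which is exactly the stated bound.

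Finally, to conclude sublinearity I would simply note that the bound is of order $n^{2/3}$, so the per-sample regret $R_n/n=O(n^{-1/3})$ tends to $0$ as $n\to\infty$, matching the notion of a sublinear bound used in the problem statement. The whole argument is therefore a substitution plus a one-line feasibility check; the only place that genuinely demands attention is the small-$n$ boundary, where the clamped choice $\epsilon=1$ forces $g$ to grow like $n\beta$, so the clean $n^{2/3}$ expression should be read as the asymptotic (large-$n$) bound rather than as a uniform bound valid for every finite $n$.
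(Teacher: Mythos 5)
Your handling of the unclamped case ($\sqrt{\eta/(2\beta)}\leq 1$) matches the paper exactly: substitute $(\epsilon^*,\eta^*)$ from Lemma~\ref{lem2} into Theorem~\ref{thm2}. The genuine gap is in the clamped case $\epsilon=1$, which you dismiss by letting $n\to\infty$ and then conceding that the bound ``should be read as the asymptotic (large-$n$) bound rather than as a uniform bound valid for every finite $n$.'' That concession is both unnecessary and incorrect: the corollary holds for every finite $n$, and the paper proves it by analyzing the clamped regime directly. The idea you are missing has two parts. First, the clamp is active precisely when $\sqrt{\eta/(2\beta)}>1$, and substituting the prescribed $\eta$ turns this into a constraint on $\beta$, namely $\beta^2 < \ln(1/\lambda_{\text{min}})/(2n)$ (the paper records the slightly weaker but sufficient form \eqref{eq:betacond}). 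Second, when $\epsilon=1$ we have $Z_t=1$ for every $t$, so HIL-N offloads every sample and its total cost is exactly $n\beta$; hence $R_n\leq n\beta$. Combining the two,
\begin{align*}
n\beta \;=\; n^{2/3}\beta^{1/3}\left(n\beta^2\right)^{1/3} \;<\; n^{2/3}\left(\frac{\beta\ln(1/\lambda_{\text{min}})}{2}\right)^{1/3} \;\leq\; 3n^{2/3}\left(\frac{\beta\ln(1/\lambda_{\text{min}})}{2}\right)^{1/3},
\end{align*}
so the stated bound holds in the clamped case as well, with room to spare. The paper phrases this last step loosely (``the total cost itself is $O(n^{1/2})$ \dots $O(n^{2/3})$ is the larger bound''), but the computation above shows the claim is exact, not merely asymptotic.

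Note also that the clamped regime is not just a ``small-$n$ boundary'': for any fixed $n$ it is entered whenever $\beta$ is sufficiently small, since the activation condition couples $\beta$ and $n$. So an argument that fixes $\beta$ and sends $n\to\infty$ does not cover all parameter settings for which the corollary is asserted; a complete proof must say something about what the algorithm actually does when $\epsilon=1$, which is precisely the observation (always offload, cost $n\beta$, and $\beta$ is forced to be tiny) that closes the hole.
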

\begin{proof}
    Proof of Corollary~\ref{corollary2} is given in Appendix~\ref{appendixcoroll}.
\end{proof}

% \paragraph*{\textbf{\textit{Remarks:}}} 
\paragraph*{\textbf{Remarks}:} 
It is worth noting the following:
\begin{enumerate}
    \item The proof steps in Theorem~\ref{thm1} closely follow some analysis of the standard EWF for PEA with the added complexity to account for the continuous experts and non-uniform discretisation. The analysis for HIL-N is novel. In particular, the design of the unbiased estimator, steps 1 and 3 in the proof of Theorem~\ref{thm2}, and the proof of Lemma~\ref{lem2} have new analysis. 
    \item The computational complexity of HIL-N is of the same order as that of HIL-F due to the similar interval generation steps.
    % \item the bounds proved in Theorem~\ref{thm1} and Theorem~\ref{thm2} are valid for any sequence $\bm{Y}$. Therefore, they are also valid for the maximum regret over all possible sequences of $\bm{Y}$.
    \item We can remove the dependency of $\eta$ on $\lambda_{\mathrm{min}}$ and $n$ by using a sequence of dynamic learning rates: $\eta_t = \frac{1}{\sqrt{t+1}}$. Sublinear regret bounds can be obtained for such a modification but we omit the analysis due to space constraints.
\end{enumerate}

% \begin{theorem}\label{thm2}
% For $\eta > 0$, HIL-N achieves the following regret bound:
% $$R_n \leq n\beta \epsilon + \frac{n\eta}{2\epsilon} + \frac{1}{\eta}\ln( 1/\lambda_\text{min}).$$
% % Choosing $\eta = \left(\frac{2\ln^2({1}/{\lambda_{\text{min}}})}{\beta n^2}\right)^{\frac{1}{3}}$ and $\epsilon = \sqrt{\frac{\eta}{2\beta}}$, we obtain
% % $$R_n \leq 3n^\frac{2}{3}\left(\frac{\beta\ln({1}/{\lambda_{\text{min}}})}{2}\right)^\frac{1}{3}$$

\section{Algorithm Implementation and Computational Complexity}\label{sec:complexity}
Recall from Lemma~\ref{lem:fixedtheta} that cumulative loss is a piece-wise constant function. We use this fact to compute the continuous domain integral in~\eqref{eq_q} efficiently by splitting the function into multiple rectangular areas of nonuniform base and then summing them up, where we do not make any discretisation error but compute the exact value of the integral.
% , thus attaining the performance of working in the continuous domain. 
In each round $t$, we increase the number of intervals by at most $1$ as we split the interval containing $\p_t$ at $\p_t$.
After receiving~$\p_t$, we thus have $N  \leq  t+1$ intervals with boundaries given by $\p_{[0]}  =  0$, $\p_{[i]}, 1  \leq  i  \leq  t$, and $\p_{[N]}  =  1$. 
The weight $w_{i,t},i\leq t+1$ of~the interval $i$ in round $t$ is then updated based on, 1) the weights in round $t-1$, and 2) the position of the interval with respect to $\p_t$.
Note that in lines $12$ of HIL-F and HIL-N, we state that the interval containing $p_t$ should be split and in line $14$ we state that the weights should be computed, but without giving more details. Below, we present four algorithmic rules that can be used to compute the probability $q_t$, interval boundaries $\{p_{[i]}\}$ and weights $\{w_{i,t}\}$, which needs to be computed in order. Let $j$ be the interval strictly below $\p_t$ and $dup$ be a Boolean variable denoting duplicate $\p_t$.
{\allowdisplaybreaks{\begin{alignat*}{2}
(i)&\;j&&\gets\max\{i:p_{[i]}<\p_t\}.\\
(ii)&\;dup&&\gets \text{\textit{FALSE}}\text{, if }\p_\tau\!\neq\!\p_t,\,\forall\tau\!<\!t,\ \text{\textit{TRUE}}\text{ otherwise}.\\
(iii)&\;q_t&&\gets\frac{\sum_{i=1}^{j}w_{i,t}(p_{[i]}-p_{[i-1]})+w_{j+1,t}(\p_t-p_{[i]})}{\sum_{i=1}^{N}w_{i,t}(p_{[i]}-p_{[i-1]})}\\
(iv)&\;N&&\gets\begin{cases}
% N_t+1&   \p_\tau\neq\p_t, \forall\tau<t\\
% N_t& \text{otherwise}.
N &(dup = \text{\textit{TRUE}}),\\
 N+1 &(dup = \text{\textit{FALSE}}).
\end{cases}\\
(v)&\;\p_{[i]}&&\gets\begin{cases}
% \p_{[i]}&  \p_{[i]}<\p_t\text{ or }\exists\tau:\p_\tau=\p_t, \tau<t\\
\p_{[i]}&  i\leq j\text{ or }(dup = \text{\textit{TRUE}})\\
% \p_t&  \p_{[i-1]}<\p_t<\p_{[i]} \text{ and }\p_\tau\neq\p_t, \forall\tau<t\\
% \p_t&  i=j+1\text{ and }\p_\tau\neq\p_t, \forall\tau<t\\
\p_t&  i=j  +  1\text{ and }(dup = \text{\textit{FALSE}})\\
\p_{[i-1]}&  j  +  1<i\leq N\text{ and }(dup = \text{\textit{FALSE}})\\
\end{cases}\\
% w_{i,t}&  \gets  \begin{cases}
%       w_{i,t-1}e^{-\eta Y_t}&  \text{HIL-F},\,\p_{[i]}  \leq  \p_t\\
%       % w_{i,t-1}&  \text{HIL-N},\,\p_{[i]}  \leq  \p_t\\
%       w_{i,t-1}e^{-\eta \beta}&  \text{otherwise}.
% \end{cases}\\
% \quad\;\, &&(iv)\,w_{i,t}\gets\begin{cases}
%       w_{i,t-1}e^{-\eta \beta}&  \p_{[i]}  >  \p_t\\
%       w_{i,t-1}e^{-\eta Y_t}&  \p_{[i]}  \leq  \p_t,\text{HIL-F}\\
%       w_{i,t-1}e^{-\eta Y_t/\epsilon}&  \p_{[i]}  \leq  \p_t,Z_t=1,\text{HIL-N}\\
%       w_{i,t-1}&  \p_{[i]}  \leq  \p_t,Z_t=0,\text{HIL-N}
%       % w_{i,t-1}&  \text{HIL-N},\,\p_{[i]}  \leq  \p_t\\
%       .
% \end{cases}\\
(vi)&\;w_{i,t}&&\gets\begin{cases}
      w_{i,t-1}e^{-\eta \beta}&  \p_{[i]}  >  \p_t,\;(dup = \text{\textit{TRUE}})\\
      w_{i-1,t-1}e^{-\eta \beta}&  \p_{[i]}  >  \p_t,\;(dup = \text{\textit{FALSE}})\\
      w_{i,t-1}e^{-\eta Y_t}&  \p_{[i]}  \leq  \p_t,\text{HIL-F}\\
      w_{i,t-1}e^{-\eta Y_t/\epsilon}&  \p_{[i]}  \leq  \p_t,Z_t=1,\text{HIL-N}\\
      w_{i,t-1}&  \p_{[i]}  \leq  \p_t,Z_t=0,\text{HIL-N}
      % w_{i,t-1}&  \text{HIL-N},\,\p_{[i]}  \leq  \p_t\\
      .
\end{cases}
% //\lambda_{\mathrm{min}}&  \gets  \min\big\{\lambda_{\mathrm{min}},\min_{1\leq i\leq N}\{\vert\p_t-\p_{[i]}\vert:\p_t\neq\p_{[i]}\}\big\}
\end{alignat*}
}}

In every round of computation, we need a certain constant number of additions, multiplications, and comparisons per interval, irrespective of the number of samples already processed. 
% This is the same case for if we use a standard EWF with uniform discretisation. 
Thus, the computational complexity in each round is in the order of the number of intervals present in that interval.
Now consider a set of $n$ input images. In our proposed algorithms, the number of intervals in round $t$ is upper bounded by $t  +  1$.
% , and is equal to $t  +  1$ when there are no repeated probabilities $p_t,\forall t  \leq  n$. 
% We assume no repetition to get the worst-case complexity, as repetitions do not add a new interval, and the total number of intervals in round~$t$ is then less than $t  +  1$. 
%Thus, the complexity of HIL-F algorithm in round $t$ is proportional to $t$ which can be represented using the Landau Big-Oh notation as $O(t)$. 
Thus, the worst-case computational complexity of HIL-F in round $t$ is $O(t)$.
Further, when $\lambda_{\mathrm{min}}$ is the minimum difference between any two probabilities, the maximum number of intervals is clearly upper bounded by ${1}/{\lambda_{\mathrm{min}}}$, which reduces the complexity to $O\left(\min\{t,{1}/{\lambda_{\mathrm{min}}}\}\right)$. 
\begin{proposition}\label{propComplexity}
The computational complexity of HIL-F and HIL-N in round $t$ is $O\left(\min\{t,1/\lambda_{\mathrm{min}}\}\right)$.
%increases initially as $O(t)$ and later saturates to a constant for rounds $t>1/\lambda_{\mathrm{min}}$ and $t>1/\Delta_{\mathrm{min}}$, respectively. 
\end{proposition}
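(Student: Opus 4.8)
The plan is to prove the proposition by establishing two independent upper bounds on the number of intervals maintained in round $t$ and then observing that the per-round cost is linear in that number, so that the complexity is the minimum of the two bounds. First I would argue that every computation the algorithm performs in round $t$ touches each interval only a constant number of times. Concretely, evaluating $q_t$ through rule (iii) is a ratio of two sums, each running once over the current intervals; relabelling the boundaries through rule (v) and refreshing the weights through rule (vi) are case distinctions applied once per interval; and locating the split index $j$ in rule (i) is at worst a linear scan over the sorted boundaries. Since a fixed number of additions, multiplications, and comparisons suffices per interval, the work in round $t$ is $\Theta(N_t)$, where $N_t$ denotes the number of intervals present at the start of round $t$.

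Next I would bound $N_t$ in two ways. On the one hand, both algorithms are initialised with a single interval and, by the interval-splitting step, create a new boundary only when $p_t$ is not a repetition, adding at most one interval per round; hence $N_t \leq t+1$. On the other hand, each interval $B_i=(p_{[i-1]},p_{[i]}]$ has length $\lambda_i = p_{[i]}-p_{[i-1]} \geq \lambda_{\mathrm{min}}$ by the definition of $\lambda_{\mathrm{min}}$, and the intervals partition $[0,1]$; summing lengths gives $N_t \lambda_{\mathrm{min}} \leq 1$, i.e. $N_t \leq 1/\lambda_{\mathrm{min}}$. Combining the two yields $N_t \leq \min\{t+1,\,1/\lambda_{\mathrm{min}}\}$, so the per-round cost is $O\!\left(\min\{t,\,1/\lambda_{\mathrm{min}}\}\right)$. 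Because HIL-N reuses the identical interval-generation and weight-update structure, differing from HIL-F only in the exponent of rule (vi), the same count applies to it verbatim.

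The one subtlety worth checking is the claim that each interval is processed in constant time despite the dynamic splitting. Inserting $p_t$ as a new boundary in rule (v) shifts the labels above it, which in a naive array implementation could cost an extra $O(N_t)$ per split; this is, however, already within the $O(N_t)$ per-round budget, so it does not change the order, and no specialised data structure is needed. I therefore expect the only real obstacle to be making the ``$\Theta(N_t)$ work per round'' claim airtight by checking each of rules (i)--(vi); once that is in place, the two counting arguments on $N_t$ deliver the bound immediately.
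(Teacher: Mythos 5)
Your proof follows essentially the same route as the paper's: constant per-interval work in each round (checked against rules (i)--(vi)), combined with the two bounds $N_t \leq t+1$ (at most one split per round) and $N_t \leq 1/\lambda_{\mathrm{min}}$, whose minimum gives the stated complexity for both HIL-F and HIL-N. Your write-up is in fact somewhat more careful than the paper's, which asserts the $1/\lambda_{\mathrm{min}}$ bound as ``clear'' where you justify it by summing interval lengths over the partition of $[0,1]$, and which does not explicitly address the array-shifting cost of a split.
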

Note that there can be many intervals with lengths larger than $\lambda_{\mathrm{min}}$, and thus the number of intervals can typically be less than ${1}/{\lambda_{\mathrm{min}}}$, which reduces the complexity in practice. As discussed earlier, one might approximate $\lambda_{\mathrm{min}}$ to $1/n$ in some datasets, which gives us a complexity of $O\left(\min\{t,n\}\right)$ in terms of the number of images. Also note that the above complexities are that of round $t$, and to get the total complexity of the algorithm, one has to sum it overall $t$. 

Finally, we note that there can be datasets where $\lambda_{\mathrm{min}}<{1}/{n}$ and for such cases the complexity from Proposition~\ref{propComplexity} will be $O(t)$. For instance, this is the case for the MNIST dataset but is not applicable for the Imagenette dataset with $\lambda_{\mathrm{min}}=\frac{1}{256}$.
In this regard, we propose a practical modification to the algorithms by limiting the interval size to a minimum of $\Delta_{\mathrm{min}}>\lambda_{\mathrm{min}}$, where $\Delta_{\mathrm{min}}$ is a parameter chosen based on the complexity and cost tradeoffs. 
One then considers any different probabilities that lie within $\Delta_{\mathrm{min}}$ of each other as duplicates while generating new intervals in line $12$ of HIL-F and HIL-N, which further reduces the complexity to $O\left(\min\{t,{1}/{\Delta_{\mathrm{min}}}\}\right)$.
We observed by choosing different values of $\lambda_{\mathrm{min}}$ (including $\frac{1}{n}$) that over a range of values, there is a notable reduction in algorithm runtime, with negligible difference in the expected average costs.

\section{Numerical Results}\label{sec:simulations}
In this section, we evaluate the performance of the proposed algorithms HIL-F and HIL-N  by comparing them against each other as well as further benchmarks. 
%, and also against some baseline algorithms. We only use trivial baselines for because 1)to the best of our knowledge, ours is the first work in this setting of offloading decisions with continuous expert prediction and 2)we get results very close to the theoretical limit given by one of these baselines. 
Our evaluation scenario consists of two different classifiers and four different datasets. Firstly, we use 8-bit quantised MobileNet~\cite{Howard2017,mobilenet}, with width parameter $0.25$,  to classify the Imagenette and Imagewoof datasets~\cite{Howard2020}. We use $0.25$ for the width parameter as it reduces the number of filters in each layer drastically, and the resulting MobileNet has a size of $0.5$ MB, suitable to fit on an IoT sensor.
%We use a minimum value for $\alpha$ at $0.25$ for the MobileNet classifier instead of the default value of $1$. Here, $\alpha$ is a parameter that decides the width of the DNN, that is, a parameter with which the number of filters in each layer increases proportionately. 
Imagenette and Imagewoof are derived from Imagenet~\cite{Imagenet2009} and each contains a mixture of 10 different image classes with approximately 400 images per class. Out of the two, Imagewoof is a tougher dataset for classification as it contains $10$ different breeds of dogs. Next, we use the test set of MNIST dataset~\cite{LeCun2010}, which contains 10000 images of handwritten digits from $0$ through $9$. For this dataset, we train a linear classifier (without regulariser), as the S-ML model. We convert the labels into vectors of size $10$. For label $l$, i.e., digit $l$, we use all zero vectors except in $l^{\mathrm{th}}$ location, where the value is $1$.  After training the classifier, we scale the output to obtain a probability distribution over the $10$ labels. The top-1 accuracy we obtain is $86$\%.
%The top-accuracy we obtained 14.something. The reason for choosing linear classifier is to show that our learning framework is not restricted to using DNNs. Furthermore, linear classifier is the simplest and relatively faster ML algorithm.\par
Finally, for CIFAR-10~\cite{cifar10,CifarDS}, we use a readily available trained CNN \cite{CifarCNN} with accuracy $84\%$ as the S-ML model. Note that for all the simulations, we invoke the assumption that the L-ML models have accuracy $1$.
%From these results, we found the maximum probability metric and the corresponding class labels to be used for our algorithms, along with the labels corresponding to the ground truth.

As explained in Section~\ref{sec:model}, we choose the expected average regret $\frac{1}{n}\E_{\bm{Y}}[R_n]$ and expected average cost $\frac{1}{n}\E_{\bm{Y},\pi}[{L}(\bm{\theta},\bm{Y})]$ as the metrics to compare the performance. Recall that these metrics are upper bounded by $1$, which is the maximum cost in a single round. For simplicity, we refer to them by average regret and average cost, respectively.
% We consider average regret $\bar{R}_n$ as the metric across various datasets. Note that $\bar{R}_n$ is upper bounded by $1$, which is the highest amount of cost one can incur in a single round.
%which enables the reader to make a quick and easy comparison between datasets of different sizes. 
%For the simulation, we feed the maximum probability generated by the classifiers to the proposed algorithms along with the maximum probability class and the ground truth. By doing this, we are considering each of these probabilities as the S-ML outputs -- in sequential order -- thereby removing the need to repeatedly run the S-ML. 
For the simulations, we take $100$ randomisations of the input sequence $Y$ and for each of these randomisations we repeat the simulations $100$ times. The randomisation is for the statistical convergence across the sequences of $\bm{Y}$ $(i.e., \E_{\bm{Y}}[.]$), and the repetitions are for the convergence over the randomised decisions based on $q_t$ made in line 4 of the algorithms (i.e., $\E_{\pi}[.]$).
% Given an input sequence, for statistical convergence of the results for the randomised decisions made in line 4 of the algorithms, we randomise the order of the sequence $100$ times and for each of these randomised sequences, we repeat the simulations $100$ times. 
We also checked with higher numbers of randomisations and repetitions and verified that $100  \times  100$ iterations are sufficient for statistical convergence. We use $\eta$ and $\epsilon$ from \eqref{eqn:etaOpt} and Lemma~\ref{lem2}, unless mentioned otherwise.\par

We use the following four baseline algorithms (i.e., policies) to compare the performance of HIL-F and HIL-N. 
\begin{enumerate}
    \item \textbf{Genie} -- a non-causal policy, where only those images that are misclassified by S-ML are offloaded.
    %and all other images are processed locally. 
    % Thus a Genie policy incur only a global minimum of (unattainable) cost and a negative regret as the regret is computed with a fixed-$\theta$ policy which incurs a higher cost. 
    %The average cost of a Genie policy is nothing but $\beta$ times the miss-classification ratio.  %$\bar{L}_n^{\text{(Genie)}}=\frac{\beta}{n}\sum_{t=1}^{n}Y_t.$
    \item $\bm{\theta^*}$ -- an optimal fixed-$\theta$ policy. 
    %where a fixed $\theta$ is used in every round minimizing the total cost across the set of all fixed-$\theta$ policies. 
    We compute this cost by running a brute-force grid search over all $\theta$.
    \item \textbf{Full offload} -- all images are offloaded to the ES. 
    % The average cost is equal to the offload cost $\beta$. $\bar{L}_n^{\text{(Full offload)}}=\beta.$
    \item \textbf{No offload} -- all images are processed locally. 
    % The average cost is equal to the miss-classification ratio. $\bar{L}_n^{\text{(No offload)}}=\frac{1}{n}\sum_{t=1}^{n}Y_t.$
\end{enumerate}
%It is worth noting that while comparison with all these policies give some insight of the performance at various regions of the offload cost $\beta$, the primary comparison is to be made with the fixed-$\theta$ policy $\bm{\theta^*}$. This is not only because we use this policy to define the regret throughout this paper, but also because of the fact that the Genie policy is clearly unattainable and the other two deterministic policies are clearly sub-optimal and independent of the S-ML capabilitits or the image being classified. 

Before we go to the figures, we show the number of images offloaded and the number of images misclassified by different policies for the Imagenette dataset with a total of 3925 images in TABLE \ref{table}. These results are basically the data point with $\beta=0.5$ from Fig.~\ref{fig:CostvBeta_1} (explained later).
% Due to the random nature of HIL algorithms, we take the average over 100 sequence randomisation and 100 algorithm repetitions and then round off to the nearest integer. 
We can immediately infer from the table that HIL-F achieves an offloading rate and misclassification rate very close to that of the optimum fixed-$\theta$ policy. 
Further, HIL-F offloads approximately the same number of images as the optimum fixed-$\theta$ policy and achieves a top-1 accuracy of $92.3$\%. Contrast this with a much lower accuracy output of $43.2$\% by the chosen MobileNet as the S-ML. This also asserts that our framework with the cost structure $\beta$ and $Y$ indeed facilitates HI by reducing the number of offloaded images that are correctly classified by S-ML.
%Also, note that Genie achieves $100$\% accuracy at a total cost of $2230 \times \beta$ which equals $1561$, while HIL-F achieves $77.4$\% accuracy at a cost $1668 \times \beta + 883 = 2050.6$.
Note that HIL-N also achieves high accuracy $95.2$\%, but it achieves this at the cost of offloading more images, $18$\% more than $\theta^*$.  This is because HIL-N can only get feedback from L-ML and chooses to offload more images to learn the best threshold.
Note that a $\beta$ of $0.5$ corresponds to minimising the total number of errors and offloads and the results can be related to what we can visually infer from Fig.~\ref{fig:histogram} in Section~\ref{sec:intro}. The optimum threshold lies around $0.45$ which is the minimum threshold above which one can expect to get a correct classification more often than not.
\begin{table}[b]
\centering
       \resizebox{\linewidth}{!}{  
       \begin{tabular}{|l|l|l|l|l|l|l|}
            \hline
            Images&Genie&Full offload&No offload&$\theta^*$&HIL-F&HIL-N\\ \hline \hline        
           offloaded&2230&3925&0&2588&2626&3056\\ \hline
            Misclassified&0&0&2230&303&304&191\\ \hline
        \end{tabular}
        }
    \caption{Number of images offloaded and misclassified for different policies on Imagenette with $\beta=0.5$ and optimal $\eta, \epsilon$.}
    \label{table}
\end{table}
\begin{figure*}
\centering
\begin{subfigure}{0.49\linewidth}
  \centering
  \includegraphics[width=1\linewidth]{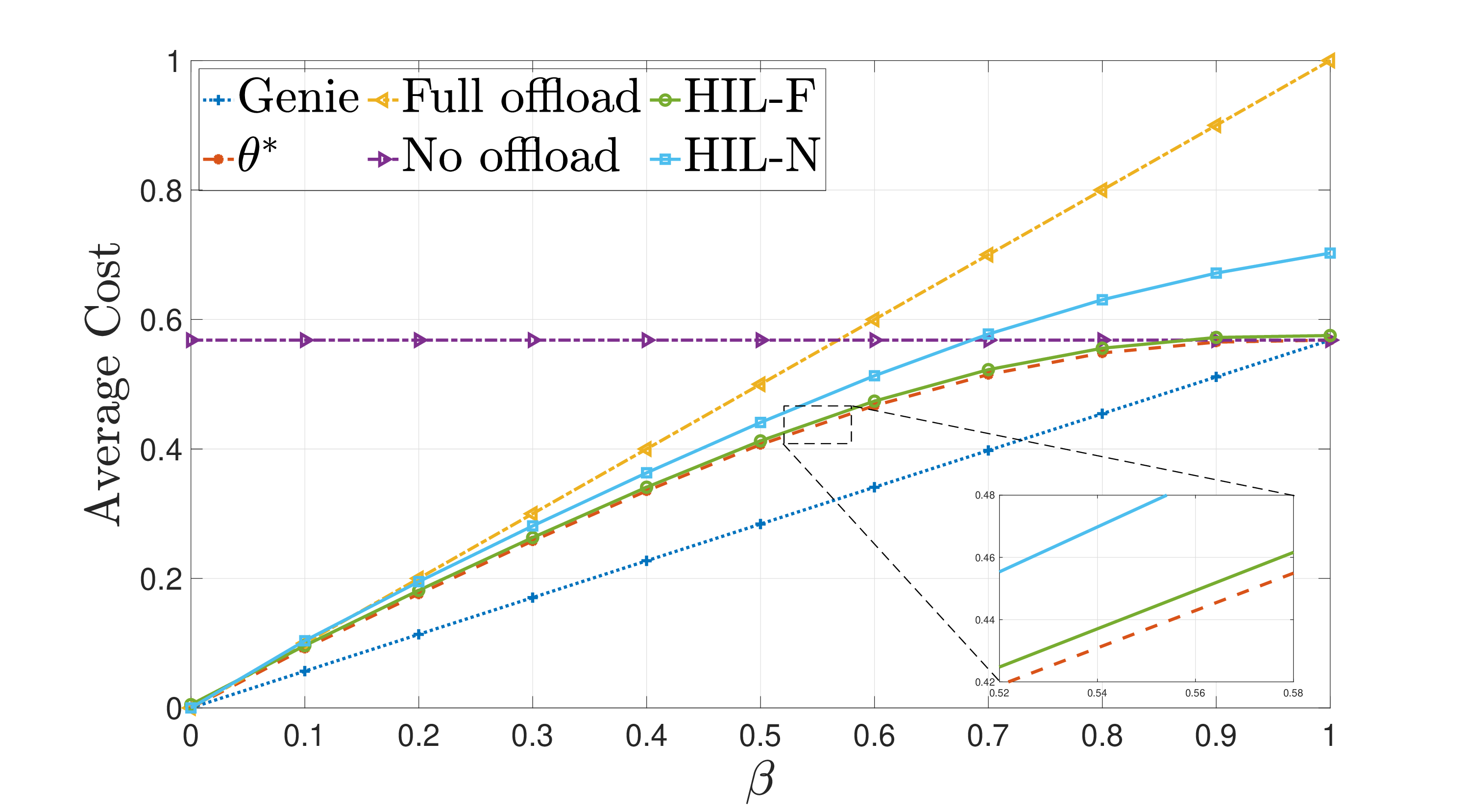}
  \caption{Imagenette dataset.}
  \label{fig:CostvBeta_1}
\end{subfigure}
~
\begin{subfigure}{0.49\linewidth}
  \centering
  \includegraphics[width=1\linewidth]{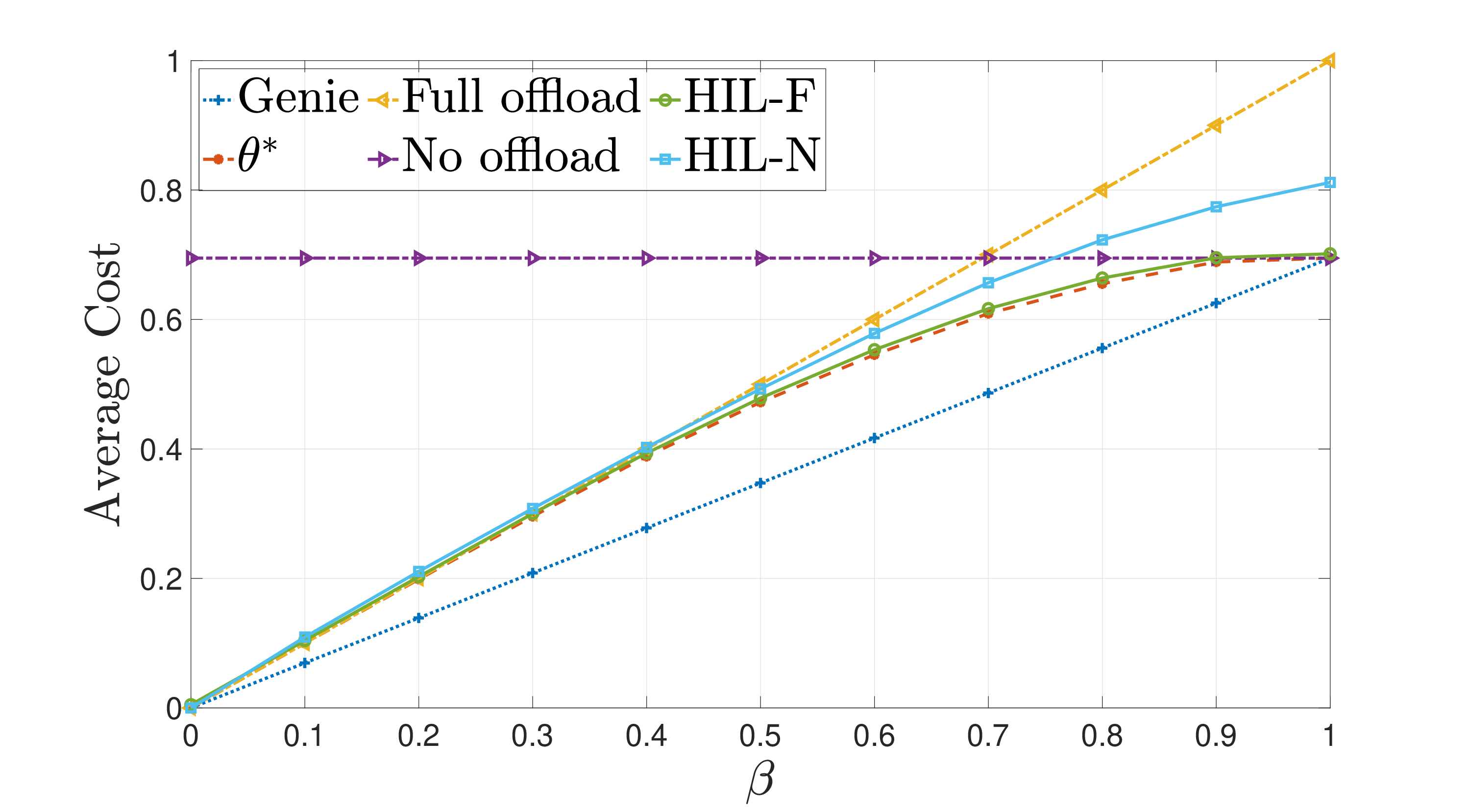}
  \caption{Imagewoof dataset}
  \label{fig:CostvBeta_2}
\end{subfigure}
\begin{subfigure}{0.49\linewidth}
  \centering
  \includegraphics[width=1\linewidth]{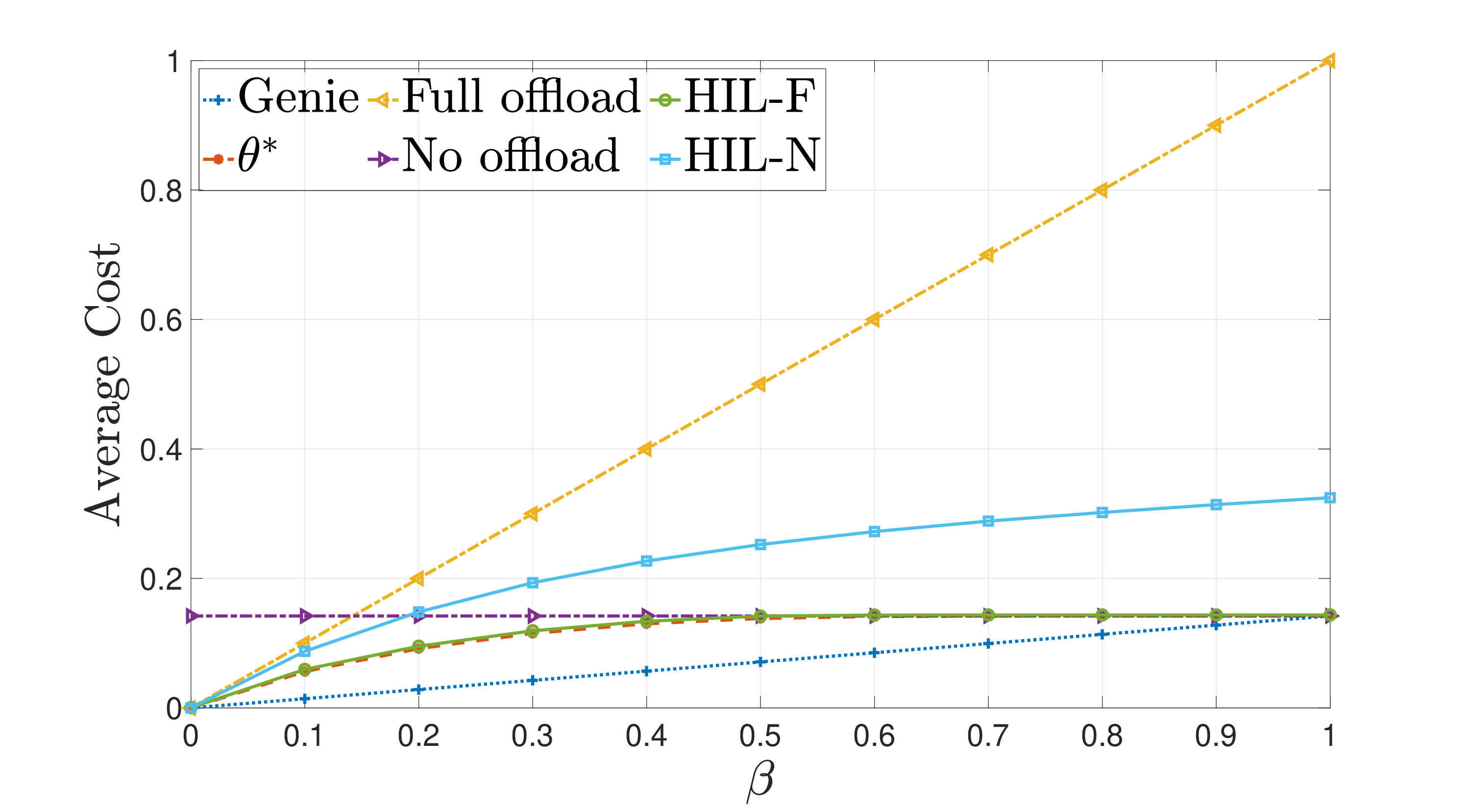}
  \caption{MNIST dataset}
  \label{fig:CostvBeta_3}
\end{subfigure}
~
\begin{subfigure}{0.49\linewidth}
  \centering
  \includegraphics[width=1\linewidth]{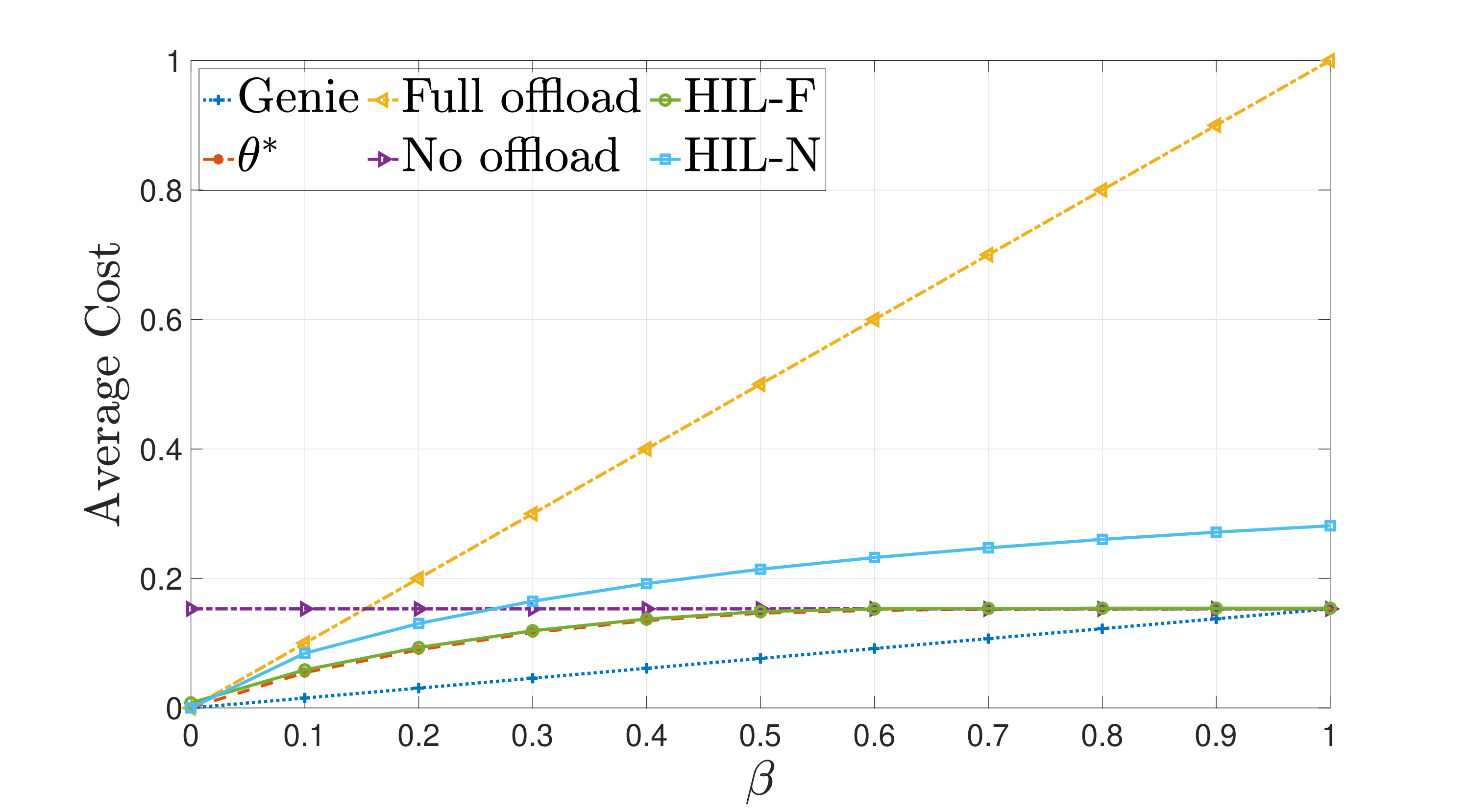}
  \caption{CIFAR-10 dataset}
  \label{fig:CostvBeta_4}
\end{subfigure}
\caption{Average cost incurred by various offloading policies vs. $\beta$ for different datasets. The bound optimising $\eta$ and $\epsilon$ are used assuming a prior knowledge of $\lambda_{\text{min}}$. Note that the curves corresponding to $\theta^*$ and HIL are very close to each other.}
\label{fig:CostvBeta}
\end{figure*}
% \begin{table}[ht]
% \centering
%        \resizebox{\linewidth}{!}{  
%        \begin{tabular}{|l|l|l|l|l|l|l|l|l|l|}
%             \hline
%           \multirow{2}{*}{Result}&\multirow{2}{*}{Genie}&\multirow{2}{*}{Full offload}&\multirow{2}{*}{No offload}&\multicolumn{2}{c|}{$\theta^*$}&\multicolumn{2}{c|}{HIL-F}&\multicolumn{2}{c|}{HIL-N}\\ \cline{5-10} 
% &&&&$\beta  =\!0.7$&$\beta\!=\!0.5$&$\beta\!=\!0.7$&$\beta\!=\!0.5$&$\beta\!=\!0.7$&$\beta\!=\!0.5$\\
%           \hline\hline  
%            Images offloaded&2230&3925&0&1502&2588&1653&2626&2524&3056\\ \hline
%             Images misclassified&0&0&2230&970&303&890&304&479&191\\ \hline
%             % \hline
%             % \multirow{2}{*}{1}&Images offloaded&2230&3925&0&1502&1668&2561\\ \cline{2-8}
%             % &Images misclassified&0&0&2230&970&883&473\\ \hline
%         \end{tabular}
%         }
%     \caption{Number of images offloaded and misclassified for different policies on Imagenette with different $\beta$, and optimal $\eta$ and $\epsilon$.}
%     \label{table}
% \end{table}
% \begin{figure}[t]
% \centerline{\includegraphics[width=0.7\linewidth]{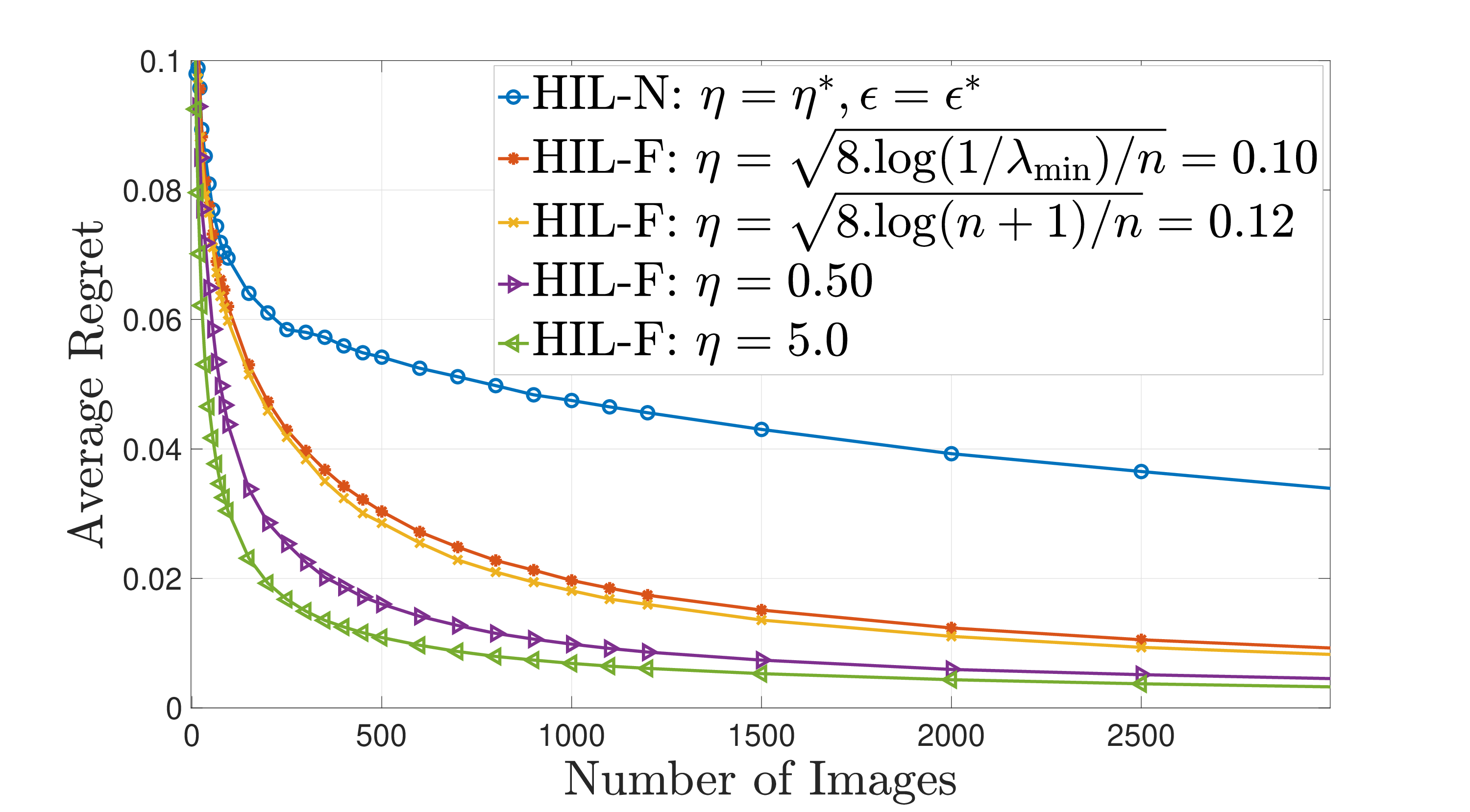}}
% \caption{Average regret vs. Number of images for $\beta=0.7$ using HIL-F and HIL-N on the Imagenette database with various $\eta$.}
% \label{fig:regretConvergence}
% \end{figure}

In Fig. \ref{fig:CostvBeta}, we compare the two proposed algorithms HIL-F and HIL-N with the baselines for all four datasets by plotting the average cost vs. $\beta$. Here, Fig.~\ref{fig:CostvBeta_1} through Fig.~\ref{fig:CostvBeta_4} correspond to Imagenette, Imagewoof, MNIST, and CIFAR-10 datasets, respectively. 
%Note that always offloading at $\beta=0$ and never offload at $\beta=1$ are optimum policies, which follows our expectations. 
Observe that HIL-F performs very close to ${\theta^*}$, having at most $6$\% higher total cost than $\theta^*$ among all four figures irrespective of the absolute value of the cost or the dataset considered. 
In Fig. \ref{fig:CostvBeta_1} we have also added an inset where we have enlarged a portion of the figure to highlight the distinction between the proposed policies and ${\theta^*}$. The vertical difference between these two corresponds to the corresponding regret.
%, which we will look at detail in a separate figure. 
We can see that HIL-F achieves a cost very close to that of ${\theta^*}$, having at most $4.5$\% higher total cost than $\theta^*$ throughout the range of $\beta$. For instance for the Imagenette dataset with $\beta  =  0.5$, this increase is less than $1.4$\%. HIL-N on the other hand is more sensitive to the properties of the considered dataset. It performs much better than the Full offload policy and also follows a similar trend as that of the HIL-F. However, for larger values of $\beta$ the comparative performance of HIL-N with the No offload policy deteriorates. This is because even when offloading is not optimum, HIL-N is offloading with a fixed probability $\epsilon>0$, to learn the ground truth $Y$. 
%Nonetheless, it always performs better than the Full offload policy. 
Furthermore, we can see by comparing the four figures that lower the accuracy of S-ML -- for instance in Fig. \ref{fig:CostvBeta_2} -- larger will be the range of $\beta$ for which HIL-N performs better than both No offload and Full offload policies.
%, which is advantageous for classification. 
\par

In Fig. \ref{fig:regretConvergence}, we show the dependency of the algorithm on the learning rate parameter $\eta$ by plotting the average regret obtained by the proposed algorithms vs. the number of images for $\beta=0.7$ and different values of $\eta$.
%Here, we switch to the average regret from the average cost that we used in the previous figure, because the changes in the average cost is very small, and the effect is much more visible with the average regret on a smaller scale. 
We show the plots for theoretical bound-optimising $\eta$, and for HIL-F we also show the plots with a few other $\eta$ for comparison. 
First, note that the HIL-N learns slower compared to HIL-F, which is an intuitive behaviour because HIL-N cannot learn from those images that are not offloaded. Also, note that the difference in regret incurred by using $\hat{\lambda}_{\min}= 1/(n+1)$ as an approximation of $\lambda_{\min}$ is minimal -- in the order $10^{-3}$. Recall that the optimum $\eta$ that we proposed is an optimum for the regret bound, but not necessarily for the regret itself. Hence, it is worth noting that, while using a larger $\eta$ is slightly beneficial in this particular dataset, this turns out to be deleterious for the regret bound, which is valid for any given dataset. Further, too large an $\eta$ will give too large weights to the thresholds that achieved lower costs in the past, making the algorithm resemble a deterministic algorithm that cannot guarantee performance~\cite{BianchiBook}.
\begin{figure}[t]
\centerline{\includegraphics[width=\linewidth]{}}
\caption{Average regret vs. Number of images for $\beta=0.7$ using HIL-F and HIL-N on the Imagenette database with various $\eta$.}
\label{fig:regretConvergence}
\end{figure}

\section{Conclusion}\label{sec:conclusion}
We considered an ED embedded with S-ML and an ES having L-ML and explored the idea of HI, where the ED can benefit from only offloading samples for which S-ML outputs incorrect inference. Since an ideal implementation of HI is infeasible, we proposed a novel meta-learning framework where the ED decides to offload or not to offload after observing the maximum softmax value $p$ output by S-ML. For the full feedback scenario, we proposed HIL-F, which assigns exponential weights to decision thresholds $\theta\in[0,1]$ based on past costs and probabilistically chooses a threshold, based on $p$, to offload or not. 
For the no-local feedback scenario, we proposed HIL-N, which uses an unbiased estimator of the cost and generates an additional Bernoulli random variable $Z$ and always offloads if $Z=1$. A novel and unique aspect of the proposed algorithms is that we use non-uniform discretisation, i.e., create new intervals in each round based on $p$ and use these intervals as experts. 
We proved that HIL-F and HIL-N have sublinear regret bounds  $\sqrt{{n\ln(1/\lambda_\text{min})}/{2}}$ and $O\left(n^{{2}/{3}}\ln^{{1}/{3}}(1/\lambda_\text{min})\right)$, respectively, and have runtime complexity $O\left(\min\{t,{1}/{\lambda_{\mathrm{min}}}\}\right)$ in round $t$. 
Here, it is worth noting that the term $1/\lambda_\text{min}$ acts similarly to the number of experts in PEA as far as regret bounds are concerned and we have explained simple methods to approximate it. 
% Further, we also proposed a modification to these algorithms that reduces the computational complexity and removes the dependency $1/\lambda_\text{min}$ from it. 
For verifying the results, we generated values of $\p$ for four datasets, namely, Imagenette, Imagewoof, MNIST, and CIFAR-10, and compared the performance of HIL-F and HIL-N with four different baseline policies, including the \textit{fixed-}$\theta$ policy. 
The cost achieved by the proposed algorithms is always lower compared to the \textit{Full offload} and the \textit{No offload} policies and is close to the cost achieved by the optimum fixed-$\theta$ policy for a wide range of $\beta$. 
More importantly, the algorithms achieve much higher accuracy compared to S-ML while offloading a marginally higher number of images compared to the optimum fixed-$\theta$ policy.
%This is particularly true for HIL-F, where the increase in average regret with respect to the fixed-$\theta$ policy, for instance, is less than $1.4$\% for the Imagenette dataset.

% In addition to $\lambda_\text{min}$, if $n$ is also not known a priori, then the problem is even more challenging. For this case, we plan to work on designing dynamic $\eta$ and proving the regret bounds. Also, instead of $p$, using other candidate metrics based on the probability distribution over classes is an interesting direction. 

There are multiple directions to which this work can be extended in the future. 
The major part of the ongoing work is the extension of the algorithm to make a preliminary decision before observing the S-ML output. 
%The envisioned algorithm uses HI as proposed in this work with some minor parameter modifications. 
Another part of our ongoing work involves modifying the algorithm by replacing certain static parameters with dynamic ones, thereby potentially improving the performance. An example of one such parameter is the learning rate $\eta$. 
We also envision that a future direction where this work would be extended is to consider multiple layers of offload decision, for instance, from device to edge and then edge to cloud.
\appendices

\section{Proof of Theorem~\ref{thm1}}\label{appendixThm1}
We will restate Theorem~\ref{thm1} and prove it.\\
\textbf{Theorem 1:}
For $\eta > 0$, HIL-F achieves the following regret bound:
$$R_n=\bar{L}(\bm{Y})  - L(\bm{\theta^*},\bm{Y}) \leq\frac{1}{\eta}\ln\frac{1}{\lambda_{\mathrm{min}}}+\frac{n\eta}{8}.$$
% where $\bar{L}(\bm{\theta},\bm{Y})$ is the total loss incurred by using HIL-F.
% Here, by ${l}_t(Y_t)$, we refer to the costs received in round $t$ by following HIL-F, and 
\begin{proof}
% Let $W_n+1$ denote the normalisation factor. That is,
% \begin{align}
%     W_{n+1} &= \int_{0}^{1} w_{n+1}(x) \dif x\\
%     &= \int_{0}^{1} e^{-\eta \sum_{t=1}^{n} l(x,Y_t)}\dif x\nonumber
% \end{align} 
Recall from Lemma \ref{lem:fixedtheta} that $p_{[i]}, B_i = (p_{[i-1]},p_{[i]}]$, and $l(B_i,Y_t)$ are the $i^{\mathrm{th}}$ smallest value, intervals formed by them, and the constant loss function within that interval at round $t$, respectively.
Also, $\lambda_i=p_{[i]}-p_{[i-1]}$ and $N\leq n+1$ correspond to the length of the intervals $i$ and the total number of intervals, respectively.
Finally, $\lambda_{\mathrm{min}} = \min_{1\leq i\leq N}\lambda_i$.
Substituting $t=0$ in~\eqref{eq:W1}, we have $W_1  =  1$. Thus, taking logarithm of $\frac{W_{n+1}}{W_1}$ gives,
\begin{align*}
    \ln \frac{W_{n+1}}{W_1} &= \ln\int_{0}^{1} e^{-\eta\sum_{t=1}^{n} l(x,Y_t)} \dif x \\
    &= \ln\sum_{i=1}^{N}\lambda_i e^{-\eta \sum_{t=1}^{n} l(B_i,Y_t)}\\
&\geq \ln\max_{1\leq i\leq N} \left(\lambda_{\mathrm{min}}e^{-\eta \sum_{t=1}^{n} l(B_i,Y_t)} \right)\\
    &=-\eta\min_{1\leq i\leq N}\sum_{t=1}^{n} l(B_i,Y_t)-\ln\frac{1}{\lambda_{\mathrm{min}}}\\
&=-\eta\min_{\theta\in[0,1]}\sum_{t=1}^{n} l(\theta,Y_t)-\ln\frac{1}{\lambda_{\mathrm{min}}}.\numberthis\label{fullinfo_20}
% \end{align*}
\intertext{Now, we bound the ratio $\frac{W_{t+1}}{W_t}$.}
% \begin{align*}
    \ln \left( \frac{W_{t+1}}{W_t} \right) &= \ln  \left( \frac{\int_{0}^{1} w_{t+1}(x) \dif x}{W_t} \right) \\
     &= \ln  \left(\int_{0}^{1}\frac{w_{t}(x)}{W_t} e^{-\eta l(x,Y_t)} \dif x  \right).
\end{align*}

By using Hoeffding’s lemma\footnote{For a bounded random variable $X\in[a,b]$, Hoeffding’s lemma states that $\ln(\E[e^{sX}])    \leq    s\E[X]  +  \frac{s^2(b-a)^2}{8}$.} in the above equation, we get
\begin{align*}
    &\ln  \left( \frac{W_{t+1}}{W_t}  \right) \leq -\eta \int_{0}^{1} \frac{w_{t}(x)}{W_t} l(x,Y_t)\dif x + \frac{\eta^2}{8} \\
    &= -\eta\int_{0}^{\p_t} \frac{w_{t}(x)}{W_t} l(x,Y_t)\dif x -\eta\int_{\p_t}^{1} \frac{w_{t}(x)}{W_t} l(x,Y_t)\dif x  + \frac{\eta^2}{8} \\
    % &= -\eta\int_{0}^{\p_t} \frac{w_{t}(x)}{W_t} l(x,Y_t)\dif x \\&\qquad\qquad-\eta\int_{\p_t}^{1} \frac{w_{t}(x)}{W_t} l(x,Y_t)\dif x  + \frac{\eta^2}{8} \\
    &= -\eta  \left(Y_t \int_{0}^{\p_t} \frac{w_{t}(x)}{W_t} \dif x + \beta \int_{\p_t}^{1} \frac{w_{t}(x)}{W_t} \dif x  \right) + \frac{\eta^2}{8}.
 % \end{align*}
\end{align*}
In the above step, we used \eqref{eq:cost}. Now using \eqref{eq_q} to replace the integrals, we get
\begin{align*}
% \begin{align*}
    \ln  \left( \frac{W_{t+1}}{W_t}  \right)&\leq -\eta  \left(Y_t q_t + \beta (1-q_t) \right) + \frac{\eta^2}{8}\nonumber\\
    &= -\eta \bar{l}(Y_t)+\frac{\eta^2}{8}.\numberthis\label{fullinfo_25}
% \end{align*}
% In the third last step above, we have used~\eqref{eq:cost} and in the second last step,~\eqref{eq_q}. 
\end{align*}
Extending this expression telescopically, we get
\begin{align*}
% \begin{align*}
    \ln \left(\frac{W_{n+1}}{W_1}\right) &= \ln \left(\prod_{t = 1}^{n} \frac{W_{t+1}}{W_t}\right)= \sum_{t=1}^{n} \ln \frac{W_{t+1}}{W_t}  \\
    &\leq \sum_{t=1}^{n} \left[-\eta \bar{l}(Y_t) + \frac{\eta^2}{8}\right] \\&= -\eta \sum_{t=1}^{n} \bar{l}(Y_t) + \frac{n\eta^2}{8}.\numberthis\label{fullinfo_30}
\end{align*}
Using \eqref{fullinfo_20} and \eqref{fullinfo_30}, we obtain
\begin{align*}
  &-\eta\min_{\theta\in[0,1]}\sum_{t=1}^{n} l(\theta,Y_t)-\ln\frac{1}{\lambda_{\mathrm{min}}} \leq  -\eta \sum_{t=1}^{n} \bar{l}(Y_t) + \frac{n\eta^2}{8}\\
%   \Rightarrow {L}(\bm{\theta},\bm{Y})&\leq \min_{\theta\in[0,1]}L(\bm{\theta},\bm{Y})+\frac{1}{\eta}\ln\frac{1}{\lambda_{\mathrm{min}}}+\frac{n\eta}{8}\\
 &\Rightarrow  \bar{L}(\bm{Y}) \leq L(\bm{\theta^*},\bm{Y})+\frac{1}{\eta}\ln\frac{1}{\lambda_{\mathrm{min}}}+\frac{n\eta}{8}\\
  &\Rightarrow R_n \leq \frac{1}{\eta}\ln\frac{1}{\lambda_{\mathrm{min}}}+\frac{n\eta}{8}.
\end{align*}
In the last two steps above, we rearranged the terms and divided them with $\eta$.
% (\bm{\theta},\bm{Y})=\sum_{t=1}^{n}\bar{l}({\theta},{Y_t})$.
\end{proof}

\section{Proof of Theorem~\ref{thm2}}\label{appendixThm2}
We will restate Theorem~\ref{thm2} and prove it.\\
\textbf{Theorem 2:}
For $\eta,\epsilon > 0$,  HIL-N achieves the regret bound 
\begin{align}
    R_n \leq n\beta \epsilon + \frac{n\eta}{2\epsilon} + \frac{1}{\eta}\ln( 1/\lambda_\text{min}).
\end{align}
\begin{proof}
\textbf{Step 1:}
Since the costs incurred and the loss function used for updating the weights are different under HIL-N, we first find a bound for the difference between the expected total cost received and the expected total cost obtained using $\tilde{l}(\theta_t,Y_t)$. From Algorithm~\ref{alg:HILPartial}, we infer that sample $t$ is offloaded if $Q_t = 0$ or $Q_t = 1$ and $Z_t = 1$, and it is not offloaded only when $Q_t = 0$ and $Z_t = 0$. Therefore, we have 
% \begin{comment}
% \begin{align}\label{eq:ldagger}
%    lr(\theta_t,Y_t) =& \beta[\mathds{1}(\Q_t > q_t)  +  \mathds{1}(\Q_t \leq q_t) \mathds{1}(Z_t=1)]\nonumber\\
%    &+ \mathds{1}(\Q_t \leq q_t)\mathds{1}(Z_t=0)Y_t \nonumber\\
%    \Rightarrow \E_{\Q Z}[lr(\theta_t,Y_t)] =& \beta[1-q_t + q_t \epsilon] + q_t(1-\epsilon) Y_t.
% \end{align}
% \end{comment}
\begin{align}
\E_{Q_t Z}\left[l(\theta_t,Y_t)\right] &= \beta[1-q_t + q_t \epsilon] + q_t(1-\epsilon) Y_t.\label{eq:ldagger}
% \end{align}
\end{align}
From~\eqref{eq:newcost}, we have
\begin{align}
% \begin{align}\label{eq1:thm2}
\tilde{l}(\theta_t,Y_t) = \frac{Y_t}{\epsilon}\,\mathbbm{1}(\theta_t \leq p_t)\,\mathbbm{1}(Z_t = 1) + \beta\, \mathbbm{1}(\theta_t > p_t)\nonumber\\
    % &= \frac{Y_t}{\epsilon}\mathds{1}(\Q_t \leq q_t)\mathds{1}(Z_t = 1) + \beta \mathds{1}(\Q_t > q_t)\nonumber \\
    \Rightarrow \E_{Q_t Z}\left[\tilde{l}(\theta_t,Y_t)\right] = Y_t q_t + \beta(1-q_t).\label{eq1:thm2}
% \end{align}
% In the second step above, we have used the fact that in Algorithm~\ref{alg:HILPartial}, the events $\{\theta_t \leq p_t\}$ and $\{\Q_t \leq q_t\}$ are equivalent.\footnote{I think this additional step is confusing. As qt computation and decision boundaries are exactly same with only change being an additional random variable and constant costs, i think we can proceed as in the first part. We can also directly take the result from Eq 18 - unbiased estimator}
\end{align}
{From~\eqref{eq:ldagger} and~\eqref{eq1:thm2}, we obtain}
\begin{align}
% \begin{alignat}{2}
\E_{Q_t Z}\left[l(\theta_t,Y_t)\right]  -  \E_{Q_t Z}\left[\tilde{l}(\theta_t,Y_t)\right]= \beta \epsilon q_t  -  Y_t\epsilon q_t.\nonumber\\
     \Rightarrow \E_{\Q Z}\left[L(\bm{\theta},\bm{Y})\right] -  \sum_{t = 1}^{n}\E_{\Q Z}\left[\tilde{l}(\theta_t,Y_t)\right]\qquad\qquad\quad\nonumber\\=\beta\epsilon \sum_{t = 1}^n  q_t - \epsilon \sum_{t = 1}^n Y_tq_t\nonumber \\
    \leq n \beta \epsilon - \epsilon\sum_{t = 1}^n Y_t q_t \qquad\nonumber\\
  \Rightarrow  - \sum_{t = 1}^{n}\E_{\Q Z}\left[\tilde{l}(\theta_t,Y_t)\right] \leq -\E_{\Q Z}\left[L(\bm{\theta},\bm{Y})\right] + n\beta \epsilon.\label{eq2:thm2}
\end{align}
In the last step above, we have used $q_t \leq 1$, for all $t$.

\textbf{Step 2:} Using the same analysis to derive~\eqref{fullinfo_20}, we obtain
\begin{align}
    \ln \left( \frac{W_{n+1}}{W_1} \right) &\geq  -\eta\min_{\theta\in[0,1]}\sum_{t=1}^{n} \tilde{l}(\theta,Y_t)-\ln\frac{1}{\lambda_{\mathrm{min}}} \nonumber
\end{align}
Note that, here we have $\tilde{l}(\theta,Y_t)$ instead of $l(\theta,Y_t)$. Now, using the fact that the expectation over the minimum is upper bounded by the minimum over expectation, we get
\begin{align}
    \Rightarrow\!  \E_Z\!\left[ \ln \left( \frac{W_{n+1}}{W_1} \right)  \right] &\!\geq\!-\eta \min_{\theta\in[0,1]}\sum_{t=1}^{n} \E_Z\!\left[\tilde{l}(\theta,Y_t)\right]\! - \ln\tfrac{1}{\lambda_{\mathrm{min}}} \nonumber\\
   \Rightarrow\! \E_Z\!\left[ \ln \left(\frac{W_{n+1}}{W_1} \right)  \right]
   &\!\geq\! -\eta L(\bm{\theta^*},\bm{Y}) - \ln\tfrac{1}{\lambda_{\mathrm{min}}}.\label{eq3:thm2}
\end{align}
%\geq -\eta\min_{\theta\in[0,1]}\sum_{t=1}^{n} l(\theta,Y_t) - \ln\frac{1}{\lambda_{\mathrm{min}}}\nonumber \\, and in the last step, we have used~\eqref{eq:unbiased}.\par

\textbf{Step 3:} In the following we find a bound for $\ln (\frac{W_{t+1}}{W_t})$.
\begin{align}
 \ln&\left(\frac{W_{t+1}}{W_t}\right) = \ln \left( \frac{\int_{0}^{1} w_{t+1}(x) \dif x}{W_t}\right) \nonumber\\
% \end{align}
% \begin{align}
    &= \ln \left(\int_{0}^{1} \frac{w_{t}(x)}{W_t} e^{-\eta \tilde{l}(x,Y_t)} \dif x \right) \nonumber\tag{using \eqref{eq:newWeights}}\\
% \end{align}
% \begin{align}
&\leq \ln \left(\int_{0}^{1} \frac{w_{t}(x)}{W_t} \left(1-\eta \tilde{l}(x,Y_t) + \frac{\eta^2}{2} \tilde{l}(x,Y_t)^2\right)\!\dif x \right). \nonumber
\end{align}
In the above step, we used the fact that $e^{-x} \leq 1-x+x^2/2$. Rearranging the terms, we get
\begin{align}
\ln&\left(\frac{W_{t+1}}{W_t}\right)\nonumber\\
&= \ln \left(1 + \int_{0}^{1} \frac{w_{t}(x)}{W_t} \left(-\eta \tilde{l}(x,Y_t) + \frac{\eta^2}{2} \tilde{l}(x,Y_t)^2\right) \dif x \right)\nonumber \\
&\leq \int_{0}^{1} \frac{w_{t}(x)}{W_t} \left(-\eta \tilde{l}(x,Y_t) + \frac{\eta^2}{2} \tilde{l}(x,Y_t)^2\right) \dif x. \nonumber 
\end{align}
The above step follows from the fact that $\ln(1+x) \leq x,\,\forall x > -1$.
\begin{align}
    \Rightarrow\!\ln\!\left(\tfrac{W_{t+1}}{W_t}\right)\!\leq\!\!\int_{0}^{1}\! \tfrac{w_{t}(x)}{W_t} \!\left(\!-\eta \tilde{l}(x,Y_t) + \tfrac{\eta^2}{2\epsilon} \tilde{l}(x,Y_t)\!\right)\!\! \dif x.\label{eq4:thm2}
\end{align}
In the last step, we have used the fact that $\tilde{l}(x,Y_t) \in [0,1/\epsilon]$. Note that the integral above can be rearranged as follows:
\begin{multline*}
    \int_{0}^{1}  \frac{w_{t}(x)}{W_t} \tilde{l}(x,Y_t) \dif x\\=  \int_{0}^{p_t}  \frac{w_{t}(x)}{W_t} \tilde{l}(x,Y_t) \dif x   +  \int_{p_t}^{1}   \frac{w_{t}(x)}{W_t} \tilde{l}(x,Y_t) \dif x
\end{multline*}
\begin{align}
    &= \frac{Y_t}{\epsilon}\mathbbm{1}(Z_t = 1) q_t + \beta (1-q_t).\nonumber
\intertext{Therefore, we have}
\E_Z\left[\int_{0}^{1}  \frac{w_{t}(x)}{W_t} \tilde{l}(x,Y_t) \dif x \right] &= Y_t q_t + \beta (1-q_t) \nonumber \\
&= \E_{Q_t Z}\left[\tilde{l}(\theta_t,Y_t)\right],\label{eq5:thm2}
\end{align}
where we have used~\eqref{eq1:thm2}. Taking expectation with respect $Z$ on both sides in~\eqref{eq4:thm2} and then substituting~\eqref{eq5:thm2},
\begin{multline*}
   \E_Z \left[ \ln\left(\frac{W_{t+1}}{W_t}\right) \right] \\\leq -\eta \E_{Q_t Z}\left[\tilde{l}(\theta_t,Y_t)\right] + \frac{\eta^2}{2\epsilon} \E_{Q_t Z}\left[\tilde{l}(\theta_t,Y_t)\right]
\end{multline*}
\begin{align}
& \leq -\eta \E_{Q_t Z}\left[\tilde{l}(\theta_t,Y_t)\right] + \frac{\eta^2}{2\epsilon}.\label{eq6:thm2}
 % \end{align}
\intertext{Above, we used the fact that $\E_{\Q Z}[\tilde{l}(\theta_t,Y_t)] \leq 1$. Taking summation of \eqref{eq6:thm2} over $t$, we obtain}
 % \begin{align}
    \E_Z\left[\ln\prod_{t=1}^{n} \left(\frac{W_{t+1}}{W_t}\right)\right] &\leq  -\eta \sum_{t = 1}^{n}\E_{Q_t Z}\left[\tilde{l}(\theta_t,Y_t)\right]  + \frac{n\eta^2}{2\epsilon} \nonumber
\end{align}
\begin{multline}
     \Rightarrow \E_Z \left[\ln \left(\frac{W_{n+1}}{W_1}\right) \right]\\\leq-\eta \left(\E_{\Q Z}\left[L(\bm{\theta}, \bm{Y})\right] - n\beta \epsilon\right)  + \frac{n\eta^2}{2\epsilon}. \label{eq7:thm2}
\end{multline}
In the last step above, we have used~\eqref{eq2:thm2}.
Combining \eqref{eq7:thm2} and~\eqref{eq3:thm2} and rearranging the terms, we obtain
\begin{align*}
    \E_{\Q Z}\left[L(\bm{\theta}, \bm{Y})\right] - L(\bm{\theta^*}, \bm{Y}) \leq n\beta \epsilon + \frac{n\eta}{2\epsilon} + \frac{1}{\eta}\ln (1/\lambda_\text{min}),
\end{align*}
which is the regret $R_n$ for HIL-N given by \eqref{regre_hiln}.
%where $R_n=\E_{\QZ}[Lr(\bm{\theta},\bm{Y})] - \min_{\theta\in[0,1]}\sum_{t=1}^{n} l(\theta,Y_t)$.
%Note that the above bounds holds after taking expectation with respect to $\bm{Y}$, given $\bm{p}$ and thus the result follows.
\end{proof}
\section{Proof of Lemma~\ref{lem2}}\label{appendixLemma}
We will now restate Lemma~\ref{lem2} and prove it.\\
\textbf{Lemma 2:} The function $g(\epsilon,\eta)$  defined in \eqref{eq:noInfo_Bound} has a global minimum at $(\epsilon^*,\eta^*)$, where $\eta^*=\left(\frac{2\ln^2({1}/{\lambda_{\text{min}}})}{\beta n^2}\right)^{{1}/{3}}$ and 
$\epsilon^*=\sqrt{\frac{\eta}{2\beta}}.$ At this minimum, we have, $$g(\epsilon^*,\eta^*) =3n^{{2}/{3}}\left(\frac{\beta\ln({1}/{\lambda_{\text{min}}})}{2}\right)^{{1}/{3}}.$$
\begin{proof}
We can easily see the strict convexity of $g(\epsilon,\eta)$ in each dimension $\epsilon$ and $\eta$ independently, which tells us that any inflexion point of the function will be either a saddle point or a minima but not a maxima. We equate the first-order partial derivatives to zero to get a set of points given by the equations
\begin{align}
    \frac{\partial g}{\partial\epsilon}=0\Rightarrow \epsilon&=\sqrt{\frac{\eta}{2\beta}},\label{partial1}\\
    \frac{\partial g}{\partial\eta}=0\Rightarrow \eta&=\sqrt{\frac{2\epsilon\ln({1}/{\lambda_{\text{min}}})}{n}}.\label{partial2}
\end{align}
However, it still remains to check if this point is unique and if this point is indeed a minimum, but not a saddle point. Seeing the uniqueness is straightforward by noting that these two expressions correspond to two non-decreasing, invertible curves in the $\epsilon\text{\textendash}\eta$ plane, and thus they have a unique intersection. 
We find this intersection denoted using $(\epsilon^*,\eta^*)$ by substituting \eqref{partial1} in \eqref{partial2}. We obtain
\begin{align*}
% \epsilon^*&=\sqrt{\frac{\eta^*}{2\beta}}\\
\eta^*  = \sqrt{\frac{2\epsilon^*\ln({1}/{\lambda_{\text{min}}})}{n}}
 = \sqrt{\frac{2\sqrt{{\eta^*}/{2\beta}}\ln({1}/{\lambda_{\text{min}}})}{n}}.
% \Rightarrow (\eta^*)^4&=\frac{4\eta^*\color{red}\ln^2({1}/{\lambda_{\text{min}}})\color{black}}{2\beta n^2}\\
% = \left(\frac{2\ln^2({1}/{\lambda_{\text{min}}})}{\beta n^2}\right)^{\frac{1}{3}},
\end{align*}
We get $\eta^*$ and $\epsilon^*$ by simplifying the above equation and then substituting it back in \eqref{partial1}. Finally, to prove that $(\epsilon^*,\eta^*)$ is indeed a minimum, we verified that the determinant of the Hessian 
%$$\frac{\partial }{\partial\epsilon^2}g(\epsilon,\eta)\cdot\frac{\partial}{\partial\eta^2}g(\epsilon,\eta)-\left(\frac{\partial }{\partial\epsilon\partial\eta}g(\epsilon,\eta)\right)^2>0$$
at $(\epsilon^*,\eta^*)$ is positive, the steps of which are not presented due to space constraints. Since $(\epsilon^*,\eta^*)$ is a unique minimum, it should be the global minimum. The proof is complete by substituting $(\epsilon^*,\eta^*)$ in \eqref{eq:noInfo_Bound}. 
%We omit these steps due to space constraint.
% \begin{align*}
%     g&(\epsilon^*,\eta^*)=n\beta\sqrt{\frac{\eta}{2\beta}}+\frac{n\eta}{2\sqrt{\frac{\eta}{2\beta}}}+\frac{\ln({1}/{\lambda_{\text{min}}})}{\eta}\\
%     =&n\sqrt{\beta/2}\left(\frac{2\ln^2({1}/{\lambda_{\text{min}}})}{\beta n^2}\right)^{\frac{1}{6}}
%     +\frac{n\sqrt{2\beta}}{2}\left(\frac{2\ln^2({1}/{\lambda_{\text{min}}})}{\beta n^2}\right)^{\frac{1}{6}}\\
%     &+\ln({1}/{\lambda_{\text{min}}})\left(\frac{2\ln^2({1}/{\lambda_{\text{min}}})}{\beta n^2}\right)^{-\frac{1}{3}}\\
% =&2^{-\nicefrac{1}{3}}n^{\nicefrac{2}{3}}\beta^{\nicefrac{1}{3}}\ln^{\nicefrac{1}{3}}({1}/{\lambda_{\text{min}}})+2^{-\nicefrac{1}{3}}n^{\nicefrac{2}{3}}\beta^{\nicefrac{1}{3}}\ln^{\nicefrac{1}{3}}({1}/{\lambda_{\text{min}}})\\
% &+2^{-\nicefrac{1}{3}}n^{\nicefrac{2}{3}}\beta^{\nicefrac{1}{3}}\ln^{\nicefrac{1}{3}}({1}/{\lambda_{\text{min}}})\\
% =&3\cdot2^{-\nicefrac{1}{3}}n^{\nicefrac{2}{3}}\beta^{\nicefrac{1}{3}}\ln^{\nicefrac{1}{3}}({1}/{\lambda_{\text{min}}})=3n^{\nicefrac{2}{3}}(\frac{\beta\ln({1}/{\lambda_{\text{min}}})}{2})^{\nicefrac{1}{3}}
% \end{align*}
\end{proof}

\section{Proof of Corollary~\ref{corollary2}}\label{appendixcoroll}
We will now restate Lemma~\ref{corollary2} and prove it.\\
\textbf{Corollary 2:} With $\eta = \left(\frac{2\ln^2({1}/{\lambda_{\text{min}}})}{\beta n^2}\right)^{{1}/{3}}$ and $\epsilon = \min\{1,\sqrt{\frac{\eta}{2\beta}}\}$, HIL-N achieves a regret bound sublinear in $n$:
$$R_n \leq 3n^{{2}/{3}}\left(\frac{\beta\ln({1}/{\lambda_{\text{min}}})}{2}\right)^{{1}/{3}}$$
\begin{proof}
Note that, if $\sqrt{\frac{\eta}{2\beta}} \leq 1$, then $\epsilon = \sqrt{\frac{\eta}{2\beta}}$ and the results directly follows from Lemma~\ref{lem2}. 
If $\sqrt{\frac{\eta}{2\beta}} > 1$, then we have $\epsilon = 1$. Substituting $\eta$ value in $\sqrt{\frac{\eta}{2\beta}} > 1$, we obtain
\begin{align}\label{eq:betacond}
\beta < \sqrt{\frac{\sqrt{2}\ln({1}/{\lambda_{\text{min}}})}{n}}.
\end{align}
Since $\epsilon = 1$, we will have $Z_t =1$ for all $t$, i.e., HIL-N will always offload. Therefore, in this case, the total cost incurred by HIL-N is equal to $n\beta$. Now, using \eqref{eq:betacond}, we obtain
\begin{align*}
    n\beta < \sqrt{{\sqrt{2}\ln({1}/{\lambda_{\text{min}}})}\big/{n}}
    = \sqrt{\sqrt{2}n\ln({1}/{\lambda_{\text{min}}})}.
\end{align*}
Thus, when \eqref{eq:betacond} holds and we have $\epsilon = 1$, the total cost itself is $O(n^\frac{1}{2})$ and therefore regret cannot be greater than $O(n^\frac{1}{2})$. The result follows by noting that $O(n^\frac{2}{3})$ is the larger bound.
%We can substitute the values of bound-minimising $\eta$ and $\epsilon$ from the Lemma in \eqref{eq:noInfo_Bound} to find the regret bound.
\end{proof}

\section{Imperfect L-ML (Accuracy $\mathbf{<100}$ \%)}\label{appendixImperfectLML}
In this appendix, we remove the assumption of a perfect L-ML and retrace the steps of the original analysis by considering an additional cost of incorrect inferences at the ES.
Let $\gamma$ be the normalised cost of incorrect inference at the L-ML\footnote{$\gamma=1$ for an application that does not differentiate between the errors made by S-ML and L-ML. In this case $X_t\in\{0,1\}$}. Here, normalisation is done according to the steps carried out to force the $0,1$ costs for S-ML inference detailed in Section~\ref{sec:model}; cf. \eqref{eq:beta_normalisation}. That is, if $C_\gamma$ is the absolute cost of L-ML inaccuracy, $\gamma=\frac{C_\gamma-C_0}{C_1-C_0}$. 
Similar to $Y_t$ defined in Section~\ref{sec:model}, define a random variable $X_t$ that takes value $0$ or $\gamma$ depending on whether the L-ML inference is correct or not. Let $\bm{X}_t=\{X_{\tau}\},\,\tau=1,2,\dots,t\leq n$ and $\bm{X}\coloneqq\bm{X}_n$. We make modifications in the definitions to include this random variable to get
\begin{align*}
l(\theta_t,Y_t,X_t) &= 
    \begin{cases}
          Y_t & \p_t \geq \theta_t, \\
        \beta+X_t & \p_t < \theta_t.
    \end{cases}\\
\theta^* &= \argmin_{\theta \in [0,1]} \sum_{t = 1}^{n} l(\theta,Y_t,X_t)\\
   L(\bm{\theta}^*,\bm{Y},\bm{X}) &= \sum_{t = 1}^{n} l(\theta^*,Y_t,X_t)
\end{align*}
With this modification, we define the modified regret, where an additional expectation is taken over the set of L-ML inferences $\bm{X}$. That is,
\begin{align*}
R_n=\E_{\pi,\bm{X}}\left[L(\bm{\theta},\bm{Y},\bm{X})\right]-L(\bm{\theta}^*,\bm{Y},\bm{X}).
\end{align*}
\subsection*{Changes in Lemma~\ref{lem:fixedtheta}}
With this modification, Lemma~\ref{lem:fixedtheta} follows in a very similar way up to \eqref{intervalcost}. 
The cost for all $\theta$ within an interval $B_i$ takes a constant value of $l(B_i,Y_t,X_t)$, which depends on whether $p_{[i]}$ is greater than $\p_t$ or not. We get
\begin{align*}
    % L(\bm{\theta^*},\bm{Y},\bm{X})&=\min_{1\leq i\leq N}\Big\{\beta\sum_{j=1}^{i-1}m_j+\sum_{k=1}^{\sum_{j=1}^{i-1}m_j}X_{[k]}+\smashoperator[r]{\sum_{k=1+\sum_{j=1}^{i-1}m_j}^{n}}Y_{[k]}\Big\}\\
    % L(\bm{\theta^*},\bm{Y},\bm{X})&=\min_{1\leq i\leq N}\left\{\beta\sum_{j=1}^{i-1}m_j+\sum_{k=1}^{n}\left(X_{[k]}\mathbb{I}_{k\leq\sum_{j=1}^{i-1}m_j}+Y_{[k]}\mathbb{I}_{k\leq\sum_{j=1}^{i-1}m_j}\right)\right\}\\
        L(\bm{\theta^*},\bm{Y},\bm{X})=\!\min_{1\leq i\leq N}\!\!\left\{\beta\sum_{j=1}^{i-1}m_j+\sum_{k=1}^{n}\left(X_{[k]}\mathbb{I}_{i}+Y_{[k]}\bar{\mathbb{I}}_{i}\right)\right\}\!,
\end{align*}
where $\mathbb{I}_{i}$ and $\bar{\mathbb{I}}_{i}$ are the short notations of the indicator random variables $\mathbb{I}_{k\leq\sum_{j=1}^{i-1}m_j}$ and $\mathbb{I}_{k>\sum_{j=1}^{i-1}m_j}$, respectively. When there are no repetitions, we get the following as the counterpart result of Lemma~\ref{lem:fixedtheta}.
\begin{align}\label{eq:app:piece-wise-cost}
    L(\bm{\theta^*},\bm{Y},\bm{X})=\!\min_{1\leq i\leq n+1}\!\!\left\{(i-1)\beta+\sum_{k=1}^{i-1}X_{[k]}+\sum_{k=i}^{n}Y_{[k]}\right\}\!.
\end{align}

\subsection*{Changes in Theorem~\ref{thm1}}
Recall that we actually do not use the expression in \eqref{eq:app:piece-wise-cost} in the proof of regret bounds, but rather use it to assert that the loss function is piece-wise constant within the intervals created by $\p_t$. To derive the regret bound, we modify $\bar{l}(Y_t)$ and define $\bar{l}(Y_t,X_t)$ as
\begin{align*}
    &\bar{l}(Y_t,X_t) =\E_{Q_t}[{l}(\theta_t,Y_t,X_t)]=Y_t q_t + \left(\beta+X_t\right)(1-q_t)\\
    \Rightarrow &\E_{X_t}\!\left[\bar{l}(Y_t,X_t)\right] = Y_t q_t + \left(\beta+\E[X_t]\right)(1-q_t)
\end{align*}
Similar changes apply to the cumulative loss function $\bar{L}(Y_t,X_t)$ as well. 
% One important change in the analysis in the HIL-F section of this work comes from the expected cost. One should take expectation with respect to the random variable $X_t$ as well while defining the expected cost. That is,
% \begin{align*}
% \bar{l}(\theta_t,Y_t,X_t)&=\E_{Q_t,X_t}[{l}(\theta_t,Y_t,X_t)]\\
% &= Y_t q_t + \left(\beta+\E[X_t]\right) (1-q_t)
% \end{align*}
Note that, $\E_{X_t}\!\left[\bar{l}(Y_t,X_t)\right]$ is equivalent to the ${\bar{l}}(Y_t)$ in the original problem with a different offload cost $\beta'=\beta+\E[X_t]$. They are the same when $\E[X_t]=0$, or in other words, the L-ML inference is perfect. Given these modifications, the algorithms and the analysis follow with one minor caveat: $\beta'$ can be above $1$ even if the offload cost is less than the cost for incorrect S-ML inference and in such cases, a trivial decision of always choosing the S-ML inference needs to be taken. For example, assuming that the misclassification ratio $\delta$ of the L-ML is known, $\beta'=\beta+\delta\gamma,\,\forall t$. Then the trivial decision to not offload is made when $\delta>\frac{1-\beta}{\gamma}=\frac{C_1-C_\beta}{C_\gamma-C_0}$.

Now, the analysis of Theorem~\ref{thm1} can be carried out in a similar fashion with $\beta$ replaced by $\beta+X_t$. Some of the important steps are given below,  where \eqref{eq:App_eq10} to \eqref{eq:App_eq20} corresponds to \eqref{fullinfo_20} to \eqref{fullinfo_30} from Section~\ref{sec:FullFeedback}.
\begin{align*}
    \ln \frac{W_{n+1}}{W_1} &\geq-\eta\min_{\theta\in[0,1]}\sum_{t=1}^{n} l(\theta,Y_t,X_t)-\ln\frac{1}{\lambda_{\mathrm{min}}}.\numberthis\label{eq:App_eq10}\\
    \ln\left(\frac{W_{t+1}}{W_t}\right)&\leq-\eta\left(Y_t q_t+\left(\beta+X_t\right)(1-q_t)\right)+\frac{\eta^2}{8}\\
    &= -\eta \bar{l}(Y_t,X_t)+\frac{\eta^2}{8}.\numberthis\label{eq:App_eq11}
\end{align*}
Extending this expression telescopically,
\begin{align*}
    \ln \left(\frac{W_{n+1}}{W_1}\right) &\leq \sum_{t=1}^{n} \left(-\eta\bar{l}(Y_t,X_t)+\frac{\eta^2}{8}\right)\\&= -\eta \sum_{t=1}^{n}\bar{l}(Y_t,X_t)+\frac{n\eta^2}{8}.\numberthis\label{eq:App_eq20}
\end{align*}
Combining \eqref{eq:App_eq10} and \eqref{eq:App_eq20}, we get
\begin{align*}
    \bar{L}(\bm{Y},\bm{X}) \leq L(\bm{\theta^*},\bm{Y},\bm{X})+\frac{1}{\eta}\ln\frac{1}{\lambda_{\mathrm{min}}}+\frac{n\eta}{8}.
\end{align*}
Taking expectations with respect to $\bm{X}$ (which are i.i.d.), we get 
\begin{align*}
    \E_{\bm{X}}\left[\bar{L}(\bm{Y},\bm{X})\right] &\leq \E_{\bm{X}}\left[L(\bm{\theta^*},\bm{Y},\bm{X})\right]+\frac{1}{\eta}\ln\frac{1}{\lambda_{\mathrm{min}}}+\frac{n\eta}{8}\\
    \Rightarrow R_n &\leq \frac{1}{\eta}\ln\frac{1}{\lambda_{\mathrm{min}}}+\frac{n\eta}{8}.\numberthis\label{eq:App_eq21}
\end{align*}
Here, the regret is an expectation over the L-ML inferences, which are assumed to be i.i.d. and carried out without any information about the S-ML inference.

\subsection*{Changes in Theorem~\ref{thm2}}
As before, we redo Theorem~\ref{thm2} by simply substituting $\beta+X_t$ instead of $\beta$ in the loss function at time $t$. Let the pseudo loss function $\tilde{l}(\theta_t,Y_t,X_t)$ be defined as follows:
\begin{align}
    \tilde{l}(\theta_t,Y_t,X_t) &=
    \begin{cases}
          0 & \p_t \geq\theta_t, Z_t = 0,\\
          \frac{Y_t}{\epsilon} & \p_t \geq\theta_t, Z_t = 1,\\
        \beta+X_t &  \p_t<\theta_t.
    \end{cases}
\end{align}
Facilitated by the fact that $X_t$ does not depend on $Q_t$, $Z$ or algorithm $\pi$ and thus $\E_{Q_t Z}\left[X_t\right]=X_t$, we can rewrite \eqref{eq:ldagger} and \eqref{eq1:thm2} respectively as
\begin{align*}
\E_{Q_t Z}\Big[l(\theta_t,Y_t,X_t)\Big] &= \left(\beta+X_t\right)[1\!-\!q_t \!+\! q_t \epsilon] + q_t(1\!-\!\epsilon) Y_t,\\
\E_{Q_t Z}\Big[\tilde{l}(\theta_t,Y_t,X_t)\Big] &= Y_t q_t + \left(\beta+X_t\right)(1\!-\!q_t).
\end{align*}
Combining the above two equations, we get
\begin{align*}
&\E_{Q_t Z}\left[l(\theta_t,Y_t,X_t)\right]  -  \E_{Q_t Z}\left[\tilde{l}(\theta_t,Y_t,X_t)\right]\\
&= \left(\beta+X_t\right)\epsilon q_t  -  Y_t\epsilon q_t.\\
\Rightarrow &\E_{\Q Z}\left[L(\bm{\theta},\bm{Y},\bm{X})\right] -  \sum_{t = 1}^{n}\E_{\Q Z}\left[\tilde{l}(\theta_t,Y_t,X_t)\right]\\
    &\leq \epsilon\sum_{t = 1}^n\left(\beta+X_t\right) - \epsilon \sum_{t = 1}^n Y_tq_t\\
\Rightarrow  &- \sum_{t = 1}^{n}\E_{\Q Z}\left[\tilde{l}(\theta_t,Y_t,X_t)\right] \\
&\leq -\E_{\Q Z}\left[L(\bm{\theta},\bm{Y},\bm{X})\right] +\epsilon\sum_{t = 1}^n\left(\beta+X_t\right).\numberthis
\end{align*}    
Note that the above is the counterpart of \eqref{eq2:thm2}. 

The remainder of the steps to modify \eqref{eq3:thm2} through \eqref{eq6:thm2} follow similarly, where we get
\begin{align}
    \E_Z\left[ \ln \left(\frac{W_{n+1}}{W_1} \right)  \right]
   \geq -\eta L(\bm{\theta^*},\bm{Y},\bm{X}) - \ln\tfrac{1}{\lambda_{\mathrm{min}}},\nonumber%\label{eq:App_eq25}
\end{align}
\begin{multline}
\E_Z \left[\ln \left(\frac{W_{n+1}}{W_1}\right) \right]\\\leq-\eta \left(\E_{\Q Z}\left[L(\bm{\theta}, \bm{Y},\bm{X})\right] - \epsilon\sum_{t = 1}^n\left(\beta+X_t\right)\right)  + \frac{n\eta^2}{2\epsilon}.\nonumber%\label{eq:App_eq26}
\end{multline}
Combining the above two, we get,
% \eqref{eq:App_eq25} and \eqref{eq:App_eq26}, we get,
% \begin{align*}
%     \E_{\Q Z}\left[L(\bm{\theta}, \bm{Y}, \bm{X})\right] - L(\bm{\theta^*}, \bm{Y}, \bm{X}) \leq n\beta \epsilon + \frac{n\eta}{2\epsilon} + \frac{1}{\eta}\ln (1/\lambda_\text{min}),
% \end{align*}
\begin{multline*}
    \E_{\Q Z}\left[L(\bm{\theta}, \bm{Y}, \bm{X})\right] - L(\bm{\theta^*}, \bm{Y}, \bm{X}) \\
    \leq \epsilon\sum_{t = 1}^n\left(\beta+X_t\right) + \frac{n\eta}{2\epsilon} + \frac{1}{\eta}\ln (1/\lambda_\text{min}).
\end{multline*}  
Taking expectation with respect to $\bm{X}$, we get
\begin{align}
    R_n\leq n\epsilon(\beta+\delta\gamma) + \frac{n\eta}{2\epsilon} + \frac{1}{\eta}\ln (1/\lambda_\text{min}).\label{eq:App_eq30}
\end{align}  
\paragraph*{Remarks:}
\begin{enumerate}
    \item Comparing \eqref{eq:Rn_HILF} and \eqref{eq:App_eq21}, we see that for HIL-F, the regret bound is same with or without a perfect L-ML. This could be attributed to the fact that the bound is independent of the offload cost $\beta$.
    \item The regret for HIL-N in \eqref{eq:App_eq30} falls back to \eqref{eq:thm2}, when $\delta=0$, that is when the L-ML is perfect.
    \item The regret bound with a varying offload cost $\beta_t,\, t=1,2\dots$ can be analysed in a very similar manner. We do this by substituting $\beta+ X_t$ in the modified loss function to $\beta_t$. The results inherit the similarity with $\beta'=\beta+\E[X_t]=\beta+\delta\gamma$ replaced by $\E[\beta_t]$.
\end{enumerate}

\bibliographystyle{IEEEtran}
\bibliography{refs.bib}

\begin{IEEEbiography}[{\includegraphics[width=1in,clip,keepaspectratio]{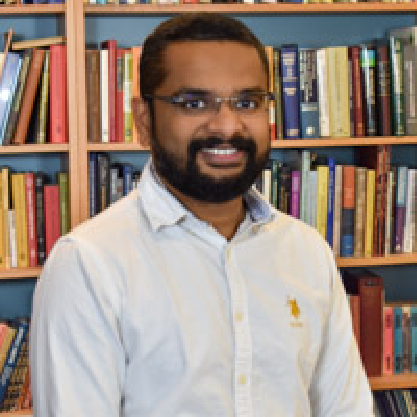}}]{Vishnu Narayanan Moothedath }is a doctoral student in the department of Intelligent Systems under the School of Electrical Engineering and Computer Science (EECS) at KTH Royal Institute of Technology from 2021. He completed his master's degree in Communication Systems from the Indian Institute of Technology (IIT) Madras in 2016 and his bachelor's degree in Electrics and Communication Engineering from the National Institute of Technology (NIT) Calicut in 2012. He has also worked for five years in the cellular industry with Intel India Pvt. Ltd. and Apple India Pvt. Ltd. in their respective LTE/5G-NR base-band modem group. His current research is in the area of edge computing and performance optimisation with a specific focus on improving the energy efficiency and responsiveness of edge computing systems through optimised sampling. 
\end{IEEEbiography}
\begin{IEEEbiography}[{\includegraphics[width=1in,clip,keepaspectratio]{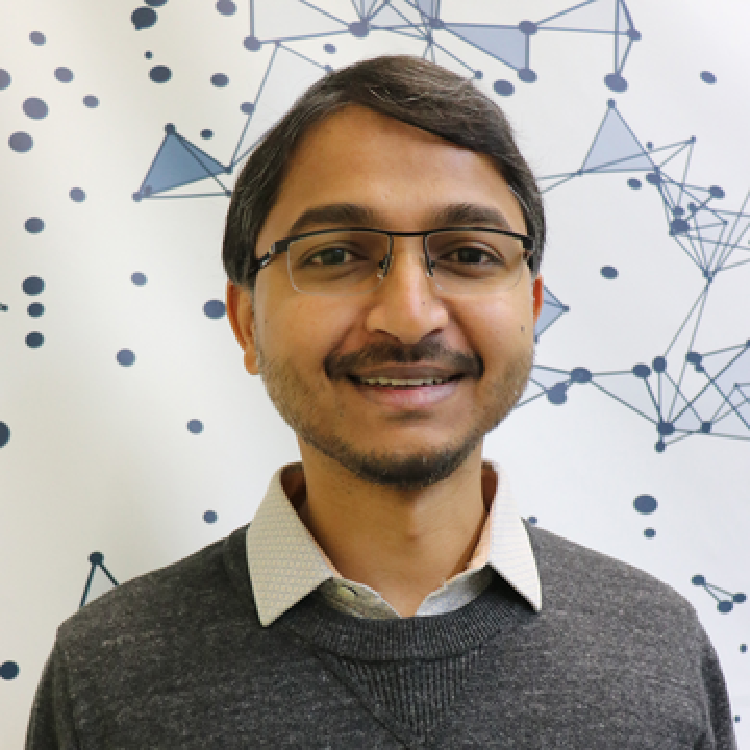}}]{Jaya Prakash Champati }(IEEE member) received his bachelor of technology degree from the National Institute of Technology Warangal, India in 2008, and master of technology degree from the Indian Institute of Technology (IIT) Bombay, India in 2010. He received his PhD in Electrical and Computer Engineering from the University of Toronto, Canada in 2017. From 2017-2020, he was a post-doctoral researcher with the division of Information Science and Engineering, EECS, KTH Royal Institute of Technology, Sweden. He is currently a Research Assistant Professor at IMDEA Networks Institute, Madrid, Spain. His general research interest is in the design and analysis of algorithms for scheduling problems that arise in networking and information systems. Currently, his research focus is in Edge Computing/Intelligence, Age of Information, Cyber-Physical Systems (CPS), and Internet of Things (IoT). Prior to joining PhD he worked at Broadcom Communications, where he was involved in developing the LTE MAC layer. He was a recipient of the best paper award at IEEE National Conference on Communications, India, 2011. 
\end{IEEEbiography}
\begin{IEEEbiography}[{\includegraphics[width=1in,clip,keepaspectratio]{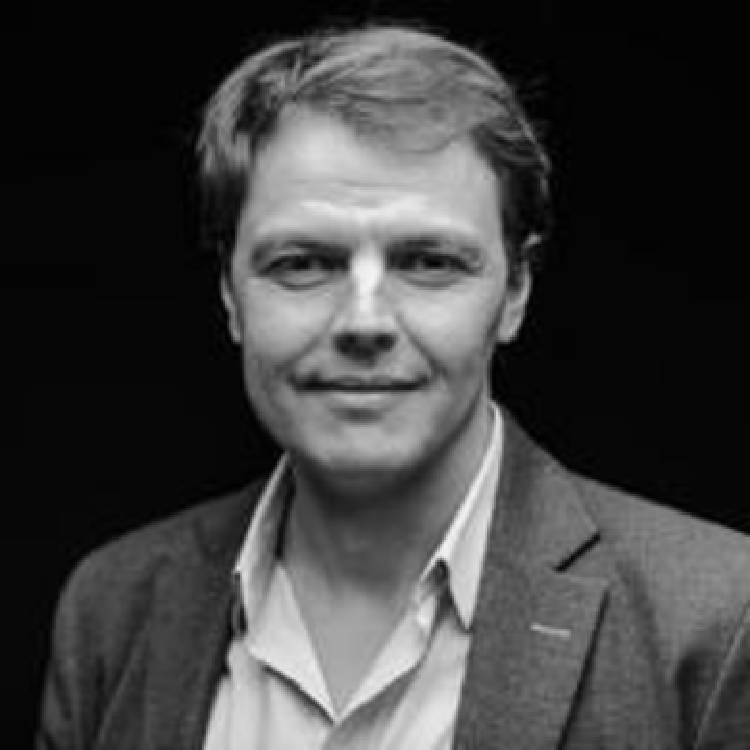}}]{James Gross }received his Ph.D. degree from TU Berlin in 2006. From 2008-2012 he was with RWTH Aachen University as assistant professor and research associate of RWTH’s center of excellence on Ultra-high speed Mobile Information and Communication (UMIC). Since November 2012, he has been with the Electrical Engineering and Computer Science School, KTH Royal Institute of Technology, Stockholm, where he is professor for machine-to-machine communications. At KTH, James served as director for the ACCESS Linnaeus Centre from 2016 to 2019, while he is currently associate director in the newly formed KTH Digital Futures research center, as well as co-director in the newly formed VINNOVA competence center on Trustworthy Edge Computing Systems and Applications (TECoSA). His research interests are in the area of mobile systems and networks, with a focus on critical machine-to-machine communications, edge computing, resource allocation, as well as performance evaluation. He has authored over 150 (peer-reviewed) papers in international journals and conferences. His work has been awarded multiple times, including the best paper awards at ACM MSWiM 2015, IEEE WoWMoM 2009 and European Wireless 2009. In 2007, he was the recipient of the ITG/KuVS dissertation award for his Ph.D. thesis.
\end{IEEEbiography}
\end{document}